\def\eqref#1{equation~\ref{#1}}
\def\1{\bm{1}}
\DeclareMathAlphabet{\mathsfit}{\encodingdefault}{\sfdefault}{m}{sl}
\SetMathAlphabet{\mathsfit}{bold}{\encodingdefault}{\sfdefault}{bx}{n}
\newcommand{\E}{\mathbb{E}}
\newcommand{\R}{\mathbb{R}}
\setlist{nolistsep}
\newcommand{\smartparagraph}[1]{\vspace{2pt} \noindent {\bf #1}}
\definecolor{shadecolor}{gray}{0.95}
\declaretheoremstyle[
headfont=\normalfont\bfseries,
notefont=\mdseries, notebraces={(}{)},
bodyfont=\normalfont,
postheadspace=0.5em,
spaceabove=1pt,
mdframed={
  skipabove=8pt,
  skipbelow=8pt,
  hidealllines=true,
  backgroundcolor={shadecolor},
  innerleftmargin=4pt,
  innerrightmargin=4pt}
]{shaded}
\newcommand{\mbE}{\mathbb{E}} 
\newcommand{\cA}{{\cal A}}
\newcommand{\cB}{{\cal B}}
\newcommand{\cC}{{\cal C}}
\newcommand{\cD}{{\cal D}}
\newcommand{\cG}{{\cal G}}
\newcommand{\xs}{x_\star}
\newcommand{\fs}{F_\star}
\newcommand{\eqdef}{:=} 
\newcommand{\dotprod}[1]{\left< #1\right>} 
\newcommand{\norm}[1]{\lVert#1\rVert}      
\newtheorem{assumption}{Assumption}
\newtheorem{lemma}{Lemma}
\newtheorem{theorem}{Theorem}
\newtheorem{corollary}{Corollary}
\theoremstyle{definition}
\theoremstyle{remark}
\newtheorem{remark}{Remark} 
 \newcommand{\myNum}[1]{(\emph{#1})}
\newtcolorbox{myblock}[2][]{colback=lightgray,colframe=gray,#1,title={#2}}
\title{Demystifying the Myths and Legends of Nonconvex Convergence of SGD}
\author{Aritra Dutta\\
Department of Mathematics\\
University of Central Florida\\
Orlando, FL 32816, USA \\
\texttt{aritra.dutta@ucf.edu} \\
\And
El Houcine Bergou \& Soumia Boucherouite\\
College of Computing \\
Mohammed VI Polytechnic University  \\
Benguerir, Morocco \\
\texttt{\{elhoucine.bergou,soumia.boucherouite\}@um6p.ma} \\
\And
Nicklas Werge \& Melih Kandemir \\
Department of Mathematics and Computer Science \\
University of Southern Denmark \\
Odense, Denmark \\
\texttt{werge@sdu.dk, kandemir@imada.sdu.dk} \\
\And
Xin Li\\
Department of Mathematics\\
University of Central Florida\\
Orlando, FL 32816, USA \\
\texttt{xin.li@ucf.edu} \\
}
\begin{document}

\maketitle

\begin{abstract}
Stochastic gradient descent (SGD) and its variants are the main workhorses for solving large-scale optimization problems with nonconvex objective functions.~Although the convergence of SGDs in the (strongly) convex case is well-understood, their convergence for nonconvex functions stands on weak mathematical foundations. Most existing studies on the nonconvex convergence of SGD show the complexity results based on either the minimum of the expected gradient norm or the functional sub-optimality gap (for functions with extra structural property) by searching the entire range of iterates. Hence the last iterations of SGDs do not necessarily maintain the same complexity guarantee. This paper shows that {\em an $\epsilon$-stationary point exists in the final iterates of SGDs,} given a large enough total iteration budget, $T$, not just anywhere in the entire range of iterates --- a much stronger result than the existing one. Additionally, our analyses allow us to measure the \emph{density of the $\epsilon$-stationary points} in the final iterates of SGD, and we recover the classical ${O(\frac{1}{\sqrt{T}})}$ asymptotic rate under various existing assumptions on the objective function and the bounds on the stochastic gradient. As a result of our analyses, we addressed certain myths and legends related to the nonconvex convergence of SGD and posed some thought-provoking questions that could set new directions for research. 
\end{abstract}

\tableofcontents 

\section{Introduction}
We consider the {\em empirical risk minimization}~(ERM) problem:
\begin{equation}\label{eq:opt}
\min_{x\in\R^d} \left[F(x) \eqdef \frac{1}{n}\sum_{i=1}^nf_i(x)\right],
\end{equation}
where $f_i(x)\eqdef\mbE_{z_i \sim \cD_i}{l(x;z_i)}$ denotes the loss function evaluated on input, $z_i,$ sampled from its distribution, $\cD_i.$ Additionally, let $F$ be nonconvex, lower bounded, with Lipschitz continuous gradient; see \S \ref{sec:assumption}. ERM problems appear frequently in statistical estimation and machine learning, where the parameter, $x$, is estimated by the SGD updates \citep{doi:10.1137/16M1080173,grace}.~For a sequence of iterates, $\{x_t\}_{t\ge 0}$ and a stepsize parameter, $\gamma_t>0$, SGD updates are of the form: 
\begin{eqnarray}\label{iter:sgd}
x_{t+1} = x_t - \gamma_t g_t, 
\end{eqnarray}
where $g_t$ is an unbiased estimator of $\nabla F(x_t)$, the gradient  of $F$ at $x_t$; that is, $\textstyle{\mathbb{E}(g_t|x_t)=\nabla F(x_t).}$ 
This approach, as given in (\ref{iter:sgd}), can be implemented by selecting an index $i(t)$ independently and uniformly 
from the set $[n],$ and processes $g_t=\nabla f_{i(t)}(x_t)$; the same index can be selected again. 


The convergence of SGD 
for the strongly convex functions is well understood  \citep{shalev2009stochastic,pmlr-v97-qian19b, shamir13}, but its convergence of the {\em last} iterates for nonconvex functions remains an open problem.
~For a nonconvex function, $F$, the existing convergence analyses of SGD show, as ${T\to\infty}$, either \myNum{i} the minimum of the norm of the gradient function, ${\min_{t\in[T]}\mathbb{E}\|\nabla F(x_t)\|\to 0}$ \citep{ghadimi2013stochastic,khaled2020better,stich2020error}\footnote{Some works show, as ${T\to\infty}$, the average of the expected gradient norm, ${\frac{1}{T}\sum_t\mathbb{E}\|\nabla F(x_t)\|\to 0}$ for fixed stepsize, or the weighted average of the expected gradient norm, ${\frac{1}{\sum_t \gamma_t}\sum_t\gamma_t\mathbb{E}\|\nabla F(x_t)\|\to 0}$ for variable stepsize \citep{doi:10.1137/16M1080173}.}  or \myNum{ii} the minimum sub-optimality gap, ${{\min_{t\in[T]}\left(\mathbb{E}(F(x_t))-F_{\star}\right)\to 0}}$ \citep{gower21a, Lei_TNNLS}.
Notably, the first-class uses the classical $L$-smoothness, and the size of the gradient function, ${\mathbb{E}\|\nabla F(x_t)\|}$ to measure the convergence. Whereas the second class considers $F$ to have extra structural property, such as Polyak-\L ojasiewicz (PL) condition \citep{gower21a, Lei_TNNLS} and the minimum sub-optimality gap is the measure of convergence. Nevertheless, in both cases the notion of $\epsilon$-stationary points\footnote{A stationary point is either a local minimum, a local maximum, or a saddle point. In nonconvex convergence of SGD, $x$ is an $\epsilon$-stationary point if $\mathbb{E}\|\nabla F(x)\|\le \epsilon$ or $(\mathbb{E}(F(x))-F_{\star})\le \epsilon$.} is weak as they only consider the minimum of the quantity $\mathbb{E}\|\nabla F(x_t)\|$ or $\left(\mathbb{E}(F(x_t))-F_{\star}\right)$ approaching to 0 (as ${T\to\infty}$) by searching over the entire range of iterates, $[T]$. Alongside, by adding one more random sampling step at the end, ${x_{\tau}\sim\{x_t\}_{t\in[T]}}$, some works show ${\mathbb{E}\|\nabla F _\tau(x_\tau)\|}\to0$ instead \citep{ghadimi2013stochastic, stich2020error, wang2019stochastic}. 

In practice, we run SGD for $T$ iterations (in the order of millions for DNN training) and return the last iterate \citep{shalev2011pegasos}. Therefore, we may ask: {\em How practical is the notion of an $\epsilon$-stationary point?} E.g., training ResNet-50 \citep{he2016deep} on ImageNet dataset \citep{imagenet} requires roughly ${600,000}$ iterations. The present nonconvex convergence analysis of SGD tells only us that at one of those ${600,000}$ iterations, $\mathbb{E}\|\nabla F(x_t)\|\approx 0$. Indeed, this is not a sufficiently informative outcome. That is, the existing results treat all the iterations equally, do not motivate why we need to keep producing more iterations, but only reveal that as long we are generating more iterations, one of them will be $\epsilon$-stationary point. However, numerical experiments suggest that running SGD for more iterations will produce $\epsilon$-stationary points progressively more frequently, and the final iterates of SGD will surely contain more $\epsilon$-stationary points. Motivated by the remarkable fact that {\em the research community chooses high iteration counts to ensure convergence}, we investigate the theoretical foundations that justify the success of this common practice by asking the following questions: \emph{Can we guarantee the existence of $\epsilon$-stationary point for SGD for the nonconvex case in the tail of the iterates, given a large enough iteration budget, $T$?} But guaranteeing the final iterates of SGD contain one of the $\epsilon$-stationary points alone does not conclude the task: we also would like to quantify the denseness of these  $\epsilon$-stationary points among the last iterates.~In all cases, in $T$ iterations, SGD achieves an optimal ${O(\frac{1}{\sqrt{T}})}$ asymptotic convergence rate for nonconvex, $L$-smooth functions \citep{carmon2020lower}. A more refined analysis is required to capture this asymptotic rate. 





We answer these questions and make the following contributions: 
 \begin{itemize}
     \item For any stepsize (constant or decreasing), we show the existence of $\epsilon$-stationary points in the final iterates of SGD for nonconvex functions; see Theorem \ref{thm:general_analysis_sgd:k} in \S\ref{sec:concentartion}.~We show that for every fixed $\eta\in(0,1],$ there exists a $T$, large enough, such that there exists $\epsilon$-stationary point in the final $\eta T$ iterations. We can recover the classic ${O(\frac{1}{\sqrt{T}})}$ convergence rate of nonconvex SGD under no additional assumptions.

     \item An interesting consequence of our analyses is that we can measure the \emph{concentration of the $\epsilon$-stationary points} in the final iterates of SGD---A first standalone result; see \S\ref{sec:concentartion}, Theorem \ref{theorem:density} and \ref{theorem:density-dss}.~That is, we show that the concentration of the $\epsilon$-stationary points over the tail portion for the SGD iterates, $x_t$ is almost 1 for large $T$, where ${t\in [(1-\eta)T,T]}$ and ${\eta\in(0,1]}$.~In optimization research community it is commonly agreed that without abundantly modifying the SGD algorithm, it is impossible to guarantee the convergence of the {\em last iterate} of SGD for nonconvex functions with the classic asymptotic rate \cite[p.17]{dieuleveut2023stochastic}.~Therefore, on a higher level, our result clarifies common intuitions in the community: \myNum{i} why we need to keep on producing more iterations while running SGD; and \myNum{ii} why the practitioners usually pick the {\em last iterate of SGD} instead of randomly picking {\em one of the many iterates} as suggested by the present nonconvex theory. We acknowledge that these points are intuitive but taken for granted in the optimization research community without a theory.~Additionally, we can extend our techniques to the convergence of random-reshuffling SGD (RR-SGD) and SGD for nonconvex and nonsmooth objectives; see~\S \ref{Appendix:rrsgd_conv} and \S\ref{sec:sgd_conv_nonsmooth}. 
     
\item We informally addressed certain myths and legends that shape the practical success of SGD in the nonconvex world. We provide simple and interpretable reasons why some of these directions do not contribute to the success and posed some thought-provoking questions which could set new directions for research. We also support our theoretical results by performing numerical experiments on nonconvex functions, both smooth (logistic regression with nonconvex penalty) and nonsmooth (feed-forward neural network with ReLU activation); see \S \ref{sec:numerical results}. For publicly available code see \S\ref{App:code}.
  
 \end{itemize}

\section{Brief literature review}
The convergence of SGD for convex and strongly convex functions is well understood; see \S\ref{appendix:literature_review} for a few related work. 
We start with nonconvex convergence of SGD. 

\begin{figure*}
    \centering
    \includegraphics[width=1\textwidth]{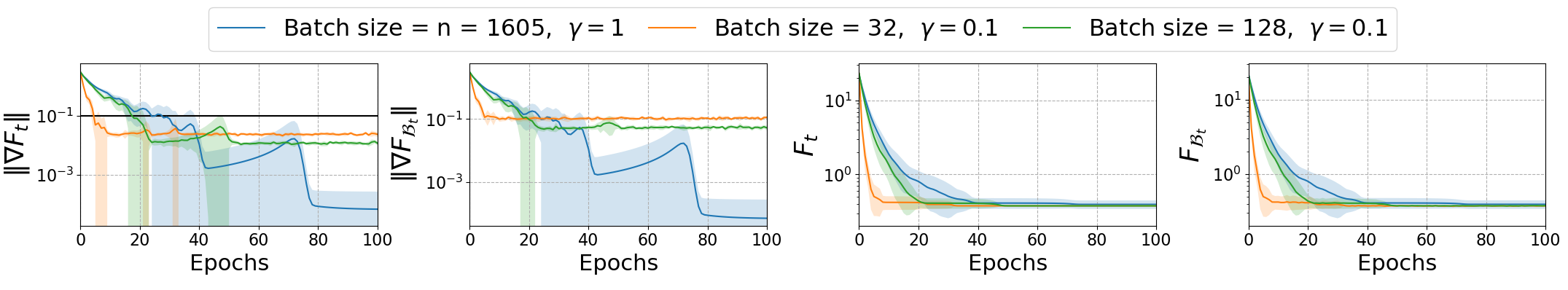}
    \caption{\small{Average of 10 runs of \texttt{SGD} on logistic regression with nonconvex regularization. Batch size, $n=1,605$, represents full batch. In the first column, the horizontal lines correspond to the precision, ${\epsilon=10^{-1}}$, and conform to our theoretical result in Theorem \ref{theorem:main-es}---If the total number of iterations is large enough then almost all the iterates in the tail are $\epsilon$-stationary points.}}
    \label{fig:log_logistic}
\end{figure*}


\textbf{Nonconvex convergence of SGD} was first proposed by \cite{ghadimi2013stochastic} for nonlinear, nonconvex stochastic programming. 
Inspired by \cite{nesterov2003introductory,gratton2008recursive}, \cite{ghadimi2013stochastic} showed that SGD achieves ${\min_{t\in[T]}\mathbb{E}\|\nabla F(x_t)\|^2\le \epsilon}$ after running for at most ${O(\epsilon^{-2})}$ steps---same complexity as the gradient descent for solving \eqref{eq:opt}.~\cite{ghadimi2016mini} extended their results to a class of constrained stochastic composite optimization problems with loss function as the sum of a differentiable (possibly nonconvex) function and a non-differentiable, convex function. Recently, \cite{vaswani2019fast} proposed a {\em strong growth condition} of the stochastic gradient, 
and showed that under SGC with a constant $\rho$, SGD with a constant step-size can attain the optimal rate, ${O(\epsilon^{-1})}$ for nonconvex functions; see Theorem 3 which is an improvement over \cite{ghadimi2013stochastic}. 
\cite{stich2020error} proposed the {\em (${M, \sigma^2}$) noise bound} for stochastic gradients and proposed a convergence analysis for (compressed and/or) error-compensated SGD; see \citep{stich2018sparsified,beta_nips}. For nonconvex functions, \cite{stich2020error} showed ${\mathbb{E}\|\nabla F (x_\tau)\|\to0},$ where ${x_{\tau}\sim\{x_t\}_{t\in[T]}}$. At about the same time, \cite{khaled2020better} proposed a new assumption, {\em expected smoothness}, see Assumption \ref{ass:ABC}, for modelling the second moment of the stochastic gradient and achieved the optimal ${O(\epsilon^{-2})}$ rate for SGD in finding stationary points
for nonconvex, $L$-smooth functions. 
Among others, \cite{Lei_TNNLS} used Holder’s continuity on gradients and showed the nonconvex convergence of SGD. Additionally, they showed the loss, $F$ converges to an {\em almost surely bounded random variable}. By using mini-batches to control the loss of iterates to non-attracted regions, \cite{fehrman2020convergence} proved the convergence of SGD to a minimum for {\em not necessarily} locally convex nor contracting objective functions. Additionally, for convergence of proximal stochastic gradient algorithms (with or without variance reduction) for nonconvex, nonsmooth finite-sum problems, see \cite{j2016proximal, li2018simple}; for non-convex problems with a non-smooth, non-convex regularizer, see \cite{xu2019non}. 


\textbf{Adaptive gradient methods} such as ADAM \citep{kingma2015adam}, AMSGrad \citep{Reddi2018OnTC}, AdaGrad \citep{duchi2011adaptive} are extensively used for DNN training. Although the nonconvex convergence of these algorithms are more involved than SGD, they focus on the same quantities as SGD to show convergence. 
See nonconvex convergence of ADAM and AdaGrad by \cite{defossez2020simple}, nonconvex convergence for AdaGrad by \cite{ward2019adagrad}, Theorem 2.1; also, see Theorem 3 in \citep{zhou2018convergence}, and Theorem 2 in \citep{yang2016unified} for a unified analysis of stochastic momentum methods for nonconvex functions. Recently, \cite{jin2022convergence} proved almost sure asymptotic convergence of momentum SGD and AdaGrad. 

\textbf{Compressed and distributed SGD} is widely studied to remedy the network bottleneck in bandwidth limited training of large DNN models, such as federated learning \citep{FL:Jakub,kairouz2019advances}. The convergence analyses of compressed and distributed SGD for nonconvex loss \citep{layer-wise,grace,alistarh2017qsgd,beta_nips,stich2020error} follow the same structure of the existing nonconvex convergence of SGD. 

\textbf{Structured nonconvex convergence analysis of SGD and similar methods.} \cite{gower21a} used extra structural assumptions on the nonconvex functions and showed SGD converges to a global minimum. 
\cite{gorbunov2021extragradient} showed when $F$ is cocoercive (monotonic and $L$-Lipschitz gradient), 
the last-iterate for extra gradient \citep{korpelevich1976extragradient} and optimistic gradient method \citep{popov1980modification} converge at ${O(\frac{1}{T})}$ rate.

\section{Assumptions}\label{sec:assumption}
\begin{assumption}\label{ass:minimum}
\textbf{(Global minimum)} 
There exists $\xs$ such that $\fs:=F(\xs)\leq F(x)$ for all $x\in\R^d$.  
\end{assumption}
\begin{assumption}\label{ass:smoothness}
\textbf{(Smoothness)} For every $i \in [n]$, the function $f_i: \R^d\to \R$ is $L$-smooth, i.e. $f_i(y)\leq f_i(x)+\dotprod{\nabla f_i(x), y-x}+\frac{L}{2}\norm{y-x}^2$ for all $x,y\in\R^d$. 
\end{assumption}
\begin{remark}\label{remark:L_smooth}
The above implies that $F$ is $L$-smooth.
\end{remark}
\textbf{Bound on stochastic gradient.} There are different assumptions to bound the stochastic gradient. One may follow the model of \cite{stich2020error}:~Let the stochastic gradient in (\ref{iter:sgd}), $g_{t}$, at iteration $t$ be of the form, ${g_{t} = \nabla F_{t} + \xi_{t},}$ with ${\mbE [\xi_{t}| x_{t}] ={0}}$. This leads to the ($M, \sigma^2$)-bounded noise assumption on the stochastic gradient \citep{stich2020error}. 
Among different assumptions on bounding the stochastic gradient \citep{doi:10.1137/16M1080173,ghadimi2013stochastic,stich2020error, Lei_TNNLS,gower21a, vaswani2019fast, layer-wise}, recently, \cite{khaled2020better} noted that the expected smoothness is the weakest among them and is as follows:

\begin{assumption}\label{ass:ABC}
\textbf{(Expected smoothness)} There exist constants $A,B,C\geq0$ such that for all $x_t\in\R^d$, we have $
   \textstyle \mbE[\norm{g_{t}}^2 \mid x_{t}] \leq 2A(F_t-\fs)+B\|\nabla F(x_t)\|^2+C.
$
\end{assumption}
Our analyses is based on Assumption \ref{ass:ABC} 
which contains other assumptions 
as special cases. 

\section{Analysis}\label{sec:concentartion}
The standard analysis of nonconvex convergence of SGD use the minimum of the expected gradient norm, ${\min_{t\in[T]}\mathbb{E}\|\nabla F(x_t)\|\to 0}$, or the average of the expected gradient norm, ${\frac{1}{T}\sum_t\mathbb{E}\|\nabla F(x_t)\|\to 0}$ as ${T\to\infty}$, by using different conditions on the second moment of the stochastic gradient. In this section, we first modify that result and show that the last iterations of SGD would maintain the same complexity guarantee under different stepsize choices but under the same set of assumptions. Finally, an interesting consequence of our convergence analyses is that they allow us to measure the \emph{density of the $\epsilon$-stationary points} in the final iterates of SGD without any additional assumptions. 
\begin{figure*}
    \centering
    \includegraphics[width=1\textwidth]{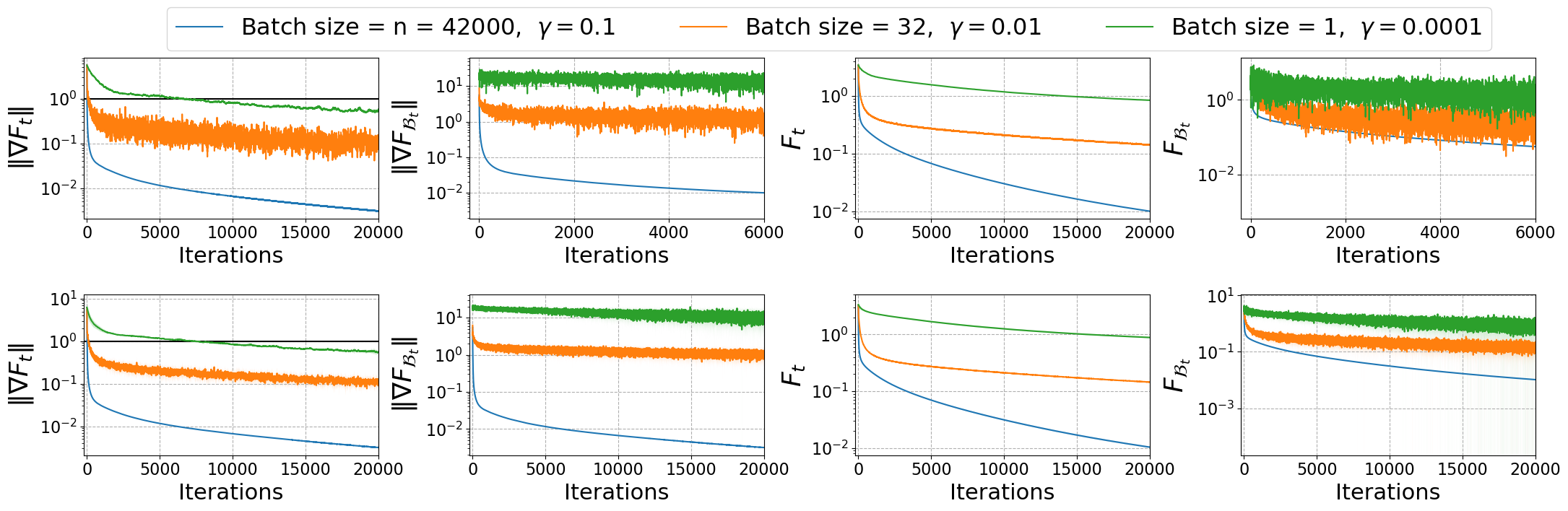}
    \caption{\small{Performance of \texttt{SGD} on MNIST digit classification. The top row shows the result of 1 single run of \texttt{SGD} while the bottom row shows the  result of the average of 10 runs. For the plots in the first column, the horizontal lines correspond to the precision, ${\epsilon=1}$---For SGD, if the total number of iterations is large enough then the entire tail comprises of the $\epsilon$-stationary points. These plots confirm our observation in Figure \ref{fig:log_logistic} but for more complicated models.}}\label{fig:N1}
\end{figure*}
Denote $r_t=\mbE\|\nabla F(x_t)\|^2$, $\delta_t=\mbE(F_t)-\fs$, and $D_{T}:=\prod_{t=1}^{T}(1+LA\gamma_{t}^{2})$. Further denote the set of indices of $\epsilon$-stationary points, $S_{\epsilon}\eqdef\{t:r_t\le \epsilon\}.$ For $\eta\in (0,1]$, let $S_{\epsilon,\eta}=S_{\epsilon}\cap [(1-\eta)T,T]$.
We start by quoting the general result for nonconvex convergence of SGD which is contained in the derivation of many known results in the literature. 

\begin{theorem}\label{thm:general_analysis_sgd}
Suppose Assumptions \ref{ass:minimum}, \ref{ass:smoothness}, and \ref{ass:ABC} hold. 
With the notations above, we have for any learning rate $(\gamma_{t})$ satisfying $\gamma_{t}\leq1/LB$ that
$
\sum_{t=1}^{T}\gamma_{t}r_{t-1}\leq \left(2\delta_{0}+\frac{C}{A}\right)D_{T}.
$
\end{theorem}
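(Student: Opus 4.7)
The plan is to build a one-step descent inequality for $\delta_t$ and then eliminate its multiplicative prefactor $(1+LA\gamma_t^2)$ by a running-product rescaling before telescoping. First, I apply $L$-smoothness of $F$ (Remark~\ref{remark:L_smooth}) between $x_{t-1}$ and $x_t = x_{t-1}-\gamma_t g_{t-1}$, take conditional expectation on $x_{t-1}$ so that the cross term reduces to $-\gamma_t\norm{\nabla F(x_{t-1})}^2$ by unbiasedness, and bound $\mbE[\norm{g_{t-1}}^2\mid x_{t-1}]$ via Assumption~\ref{ass:ABC}. After taking full expectation and subtracting $\fs$, the recursion is
\[
\delta_{t}\le(1+LA\gamma_{t}^{2})\delta_{t-1}-\gamma_{t}\!\left(1-\tfrac{LB\gamma_{t}}{2}\right)r_{t-1}+\tfrac{LC}{2}\gamma_{t}^{2}.
\]
The stepsize hypothesis $\gamma_t\le 1/(LB)$ gives $1-LB\gamma_t/2\ge 1/2$, so this rearranges to $\gamma_t r_{t-1}\le 2(1+LA\gamma_t^2)\delta_{t-1}-2\delta_t+LC\gamma_t^2$.

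The pivotal step is to divide the resulting inequality by $D_t=\prod_{s=1}^t(1+LA\gamma_s^2)$. Because $(1+LA\gamma_t^2)/D_t=1/D_{t-1}$, this converts the contaminated recursion into a genuine telescope:
\[
\frac{\gamma_t r_{t-1}}{D_t}\le\frac{2\delta_{t-1}}{D_{t-1}}-\frac{2\delta_t}{D_t}+\frac{LC\gamma_t^2}{D_t}.
\]
Summing $t=1,\dots,T$ (with $D_0=1$) and dropping the non-positive boundary term $-2\delta_T/D_T$ by Assumption~\ref{ass:minimum} yields $\sum_{t=1}^T \gamma_t r_{t-1}/D_t\le 2\delta_0+LC\sum_{t=1}^T \gamma_t^2/D_t$. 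The residual noise sum collapses by the same identity: from $LA\gamma_t^2/D_t=1/D_{t-1}-1/D_t$ one has $\sum_{t=1}^T LA\gamma_t^2/D_t=1-1/D_T\le 1$, hence $\sum_{t=1}^T \gamma_t^2/D_t\le 1/(LA)$. Substituting back leaves $\sum_{t=1}^T \gamma_t r_{t-1}/D_t\le 2\delta_0+C/A$. Finally, multiplying through by $D_T$ and lower-bounding the left-hand side via the monotonicity $D_T\ge D_t$ (each factor is $\ge 1$) gives the stated bound.

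The main obstacle is spotting the integrating factor: the $(1+LA\gamma_t^2)$ prefactor attached to $\delta_{t-1}$ blocks a direct telescope, and the remedy is to scale the whole inequality by $D_t$. Once this is in place, the same identity also collapses the $\gamma_t^2/D_t$ noise tail, so both nontrivial sums are handled by a single device; everything else is routine calculus on the descent lemma. A minor bookkeeping point is the degenerate case $A=0$, in which $D_T=1$ and one should bypass the division step by summing the post-rearrangement inequality directly, recovering the standard constant-step bound with $C/A$ read as a limit.
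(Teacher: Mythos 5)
Your proof is correct and follows essentially the same route as the paper's: both rescale the one-step descent inequality by the running product $\prod(1+LA\gamma_i^2)$ to telescope $\delta_t$, collapse the residual $\gamma_t^2$-noise sum using the same difference identity, apply $\gamma_t\le 1/(LB)$, and invoke $\prod_{i=t+1}^T(1+LA\gamma_i^2)\ge 1$ to strip the weights off the left-hand side. Your choice to divide each step by the prefix product $D_t$ rather than multiply by the suffix product $\prod_{i=t+1}^T(1+LA\gamma_i^2)=D_T/D_t$ is purely cosmetic, as the two differ only by a global factor of $D_T$ reintroduced at the end; your explicit remark on the degenerate case $A=0$ is a small welcome addition the paper leaves implicit.
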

From Theorem \ref{thm:general_analysis_sgd}, we obtain the classic nonconvex convergence of SGD with a constant learning rate. For decreasing learning rate of Theorem~\ref{thm:general_analysis_sgd}, see Corollary~\ref{cor:general_analysis_sgd:decreasing}. 
\begin{corollary}[Constant learning rate] \citep{khaled2020better} \label{cor:general_analysis_sgd:constant}
For constant stepsize $\gamma_{t}=\gamma$ in Theorem \ref{thm:general_analysis_sgd}, we have
$\min_{1\leq t\leq T}r_{t} \leq \frac{D_{T}}{\gamma T}\left(2\delta_{0}+\frac{C}{A}\right) \leq \frac{3\sqrt{LA}}{\ln(3)\sqrt{T}}\left(2\delta_{0}+\frac{C}{A}\right).$
\end{corollary}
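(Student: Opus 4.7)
The plan is to directly specialize Theorem~\ref{thm:general_analysis_sgd} to the constant stepsize $\gamma_t\equiv\gamma$ and then pick $\gamma$ to extract the $O(1/\sqrt{T})$ rate. Setting $\gamma_t=\gamma$ collapses the product $D_T=\prod_{t=1}^T(1+LA\gamma_t^2)$ to $(1+LA\gamma^2)^T$, so Theorem~\ref{thm:general_analysis_sgd} reads
\[
\gamma\sum_{t=1}^T r_{t-1}\leq \left(2\delta_0+\tfrac{C}{A}\right)(1+LA\gamma^2)^T.
\]
The first inequality in the corollary is then immediate from the min-versus-average bound $T\cdot\min_{1\le t\le T} r_t\le \sum_{t=1}^T r_{t-1}$: dividing both sides by $\gamma T$ yields $\min_t r_t\le \frac{D_T}{\gamma T}\bigl(2\delta_0+\tfrac{C}{A}\bigr)$.

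For the second inequality, I would tune the stepsize so that $LA\gamma^2 T$ is bounded by a universal constant, which keeps $D_T$ bounded while letting $\gamma T$ grow like $\sqrt{T}$. Concretely, setting $\gamma=\sqrt{\ln(3)/(LAT)}$ gives $LA\gamma^2=\ln(3)/T$, and the classical inequality $(1+x/T)^T\le e^x$ yields $D_T\le e^{\ln 3}=3$. Plugging this into the first bound gives
\[
\frac{D_T}{\gamma T} \;\le\; \frac{3}{\gamma T} \;=\; 3\sqrt{\frac{LA}{\ln(3)\,T}},
\]
which has the advertised $O(\sqrt{LA/T})$ scaling, with the precise constant in front dictated by the prefactor chosen inside $\gamma$ (so matching the exact form $3\sqrt{LA}/(\ln(3)\sqrt{T})$ versus $3\sqrt{LA}/(\sqrt{\ln(3)}\sqrt{T})$ is a minor bookkeeping adjustment of that prefactor). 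The constraint $\gamma\le 1/(LB)$ required by Theorem~\ref{thm:general_analysis_sgd} is satisfied as soon as $T$ is large enough (explicitly, $T\ge LB^2\ln(3)/A$), so the bound is intrinsically asymptotic in $T$.

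I do not foresee any substantive obstacle: the corollary is essentially a direct computation from Theorem~\ref{thm:general_analysis_sgd}, combined with the two elementary ingredients above (min $\le$ average, and the exponential bound on $(1+x/T)^T$). The only conceptually interesting choice is the scaling of $\gamma$ with $T$: requiring $LA\gamma^2 T=O(1)$ forces $\gamma=\Theta(1/\sqrt{T})$, which is precisely the familiar reason that constant-stepsize SGD in the nonconvex, $L$-smooth regime converges at rate $O(1/\sqrt{T})$ and no faster---any attempt to make $\gamma$ larger blows up $D_T$ exponentially, while any smaller choice deflates $\gamma T$ and worsens the leading factor.
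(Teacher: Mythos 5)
Your proof follows the paper's own argument exactly: specialize Theorem~\ref{thm:general_analysis_sgd} to constant $\gamma$, bound $\sum_{t=1}^T r_{t-1}\ge T\min_t r_t$, choose $\gamma=\sqrt{\ln(3)/(LAT)}$ so that $D_T=(1+\ln(3)/T)^T\le 3$, and verify $\gamma\le 1/(LB)$ for $T\ge LB^2\ln(3)/A$.

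One point you flag deserves a stronger statement than ``minor bookkeeping'': with $\gamma=\sqrt{\ln(3)/(LAT)}$ you get $\gamma T=\sqrt{T\ln(3)/(LA)}$, hence $D_T/(\gamma T)\le 3\sqrt{LA}/(\sqrt{\ln 3}\,\sqrt{T})$, which is strictly \emph{larger} than the stated $3\sqrt{LA}/(\ln(3)\sqrt{T})$ because $\ln 3>1$. No rescaling of the prefactor inside $\gamma$ simultaneously gives $D_T\le 3$ and $\gamma T=\ln(3)\sqrt{T}/\sqrt{LA}$ (setting $\gamma=\ln(3)/\sqrt{LAT}$ forces $D_T\le e^{(\ln 3)^2}>3$). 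So the corollary as printed carries a small constant error---$\sqrt{\ln 3}$ versus $\ln 3$ in the denominator---and your derivation, which produces the correct $3\sqrt{LA}/(\sqrt{\ln 3}\,\sqrt{T})$, is the right one; the asymptotic $O(\sqrt{LA/T})$ rate is unaffected.
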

Corollary \ref{cor:general_analysis_sgd:constant} shows that in the entire range of iterates, $[T]$, there is one $r_{t}$ that approaches to 0 with a rate $O(\frac{1}{\sqrt{T}})$ as ${T\to\infty}$. But as we mentioned, this information is imprecise.

We now outline the proof of Theorem~\ref{thm:general_analysis_sgd}. 
Note that the key step using the $L$-smoothness of $F$ and {\it expected smoothness} of the stochastic gradients is the following inequality \cite[Lemma 2]{khaled2020better}:
\begin{equation}\label{eq:sgd ineq main-es}
\textstyle{\gamma_{t}\left(1-\frac{LB\gamma_{t}}{2}\right)r_{t} \leq \left(1+LA\gamma_{t}^{2}\right)\delta_{t} -\delta_{t+1}+\frac{LC\gamma_{t}^{2}}{2}}.
\end{equation}
Next, for any learning rate $(\gamma_{t})$ chosen such that ${\sum_{t=1}^{\infty}\gamma_{t}^{2}<\infty}$, we have ${D_{\infty}\leq\exp(LA\sum_{t=1}^{\infty}\gamma_{t}^{2})<\infty}$. Unrolling the recurrence relation in (\ref{eq:sgd ineq main-es}) reduces the LHS to
${\sum_{t=1}^{T}\prod_{i=t+1}^{T}(1+LA\gamma_{i}^{2})(\gamma_{t}-LB\gamma_{t}^{2}/2)r_{t-1}}$. If we assume ${\gamma_{t}\leq1/LB}$, then ${\gamma_{t}-LB\gamma_{t}^{2}/2\geq\gamma_{t}-\gamma_{t}/2=\gamma_{t}/2}$, and the lower bound of the LHS of the inequality (\ref{eq:sgd ineq main-es}) becomes
$\frac{1}{2}\sum_{t=1}^{T}\gamma_{t}r_{t-1},$ as $\prod_{i=1}^{T}(1+LA\gamma_{i}^{2})\geq 1$. Hence, the proof of Theorem \ref{thm:general_analysis_sgd}; see more detailed proof in \S B.1.

Can we do any better? The answer to this question demystifies the first myth of the nonconvex convergence of SGD.  

\textbf{Myth I: The nonconvex convergence of SGD is given by the minimum of the norm of the gradient function, $r_t,$ over the range of entire iterates $[T]$.} 
For any ${k \in [T]}$, we can split the LHS $\frac{1}{2}\sum_{t=1}^{T}\prod_{i=t+1}^{T}(1+LA\gamma_{i}^{2})\gamma_{t}r_{t-1}$ of Theorem~\ref{thm:general_analysis_sgd} into two sums as:
$\frac{1}{2}\sum_{t=1}^{k-1}\prod_{i=t+1}^{T}(1+LA\gamma_{i}^{2})\gamma_{t}r_{t-1} + \frac{1}{2}\sum_{t=k}^{T}\prod_{i=t+1}^{T}(1+LA\gamma_{i}^{2})\gamma_{t}r_{t-1}.
$
Manipulating these quantities, we can obtain more precise information on the convergence of the $r_t$ in the tail portion of the iterates of SGD for nonconvex functions, which is given by $\min_{k\leq t \leq T}r_{t}$.~To the best of our knowledge, this marks an improvement over the existing classical convergence results in \cite{ghadimi2013stochastic,khaled2020better,stich2020error}. We formally quote this in the following theorem; see proof in Appendix \S\ref{appendix:proofs}. 
\begin{theorem} \label{thm:general_analysis_sgd:k}
Suppose Assumptions \ref{ass:minimum}, \ref{ass:smoothness}, and \ref{ass:ABC} hold. 
For any learning rate $(\gamma_{t})$ satisfying ${\gamma_{t}\leq1/LB}$, we have for any ${k \in [T]}$,
${\min_{k\leq t \leq T}r_{t} \leq \left(2\delta_{0}+\frac{C}{A}-\frac{2\delta_{T}}{D_{T}}-\frac{1}{LA\gamma_{1}}\min_{1\leq t \leq k-1}r_{t}\right)LA\gamma_{k}D_{k}
\leq\left(2\delta_{0}+\frac{C}{A}\right)LA\gamma_{k}D_{k}.}$
\end{theorem}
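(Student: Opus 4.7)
The plan is to refine the proof of Theorem~\ref{thm:general_analysis_sgd} with two extra bookkeeping steps: keep the endpoint suboptimality $\delta_T$ that is ordinarily discarded, and split the resulting weighted sum of squared-gradient norms at the cutoff index $k$ before applying the $\min$ lower bound.

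I would start from the one-step descent inequality \eqref{eq:sgd ineq main-es}, multiply both sides by $P_{t+1}:=\prod_{i=t+1}^T(1+LA\gamma_i^2)$, and sum over $t=1,\dots,T$. The $\delta$ contributions telescope to $D_T\delta_0-\delta_T$, and the noise term $\frac{LC}{2}\sum_t \gamma_t^2 P_{t+1}$ is handled via the algebraic identity
\[
LA\gamma_t^2\, P_{t+1}\;=\;P_t-P_{t+1},
\]
which itself telescopes to $\sum_t \gamma_t^2 P_{t+1}=(D_T-1)/(LA)$, bounding the noise term by $C(D_T-1)/(2A)\le CD_T/(2A)$. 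Combined with the step-size condition $\gamma_t\le 1/(LB)\Rightarrow 1-LB\gamma_t/2\ge 1/2$, this yields the strengthened master inequality
\[
\sum_{t=1}^T \gamma_t\, r_{t-1}\, P_{t+1}\;\le\; D_T\!\left(2\delta_0+\frac{C}{A}\right)-2\delta_T.
\]
Dividing through by $D_T$ and using $P_{t+1}/D_T=1/D_t$ gives the cleaner form
\[
\sum_{t=1}^T \frac{\gamma_t}{D_t}\,r_{t-1}\;\le\; 2\delta_0+\frac{C}{A}-\frac{2\delta_T}{D_T}.
\]
This is already strictly tighter than Theorem~\ref{thm:general_analysis_sgd}, because the $-2\delta_T/D_T$ term on the right is retained rather than dropped using $\delta_T\ge 0$.

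Next I would split the sum at $k$ into $\sum_{t=1}^{k-1}+\sum_{t=k}^T$, and lower-bound each sub-sum by the minimum of the corresponding $r$-values times the matching weight. Concretely, I would use the two weight inequalities
\[
\sum_{t=1}^{k-1}\frac{\gamma_t}{D_t}\;\ge\; \frac{1}{LA\gamma_1},\qquad \sum_{t=k}^T\frac{\gamma_t}{D_t}\;\ge\; \frac{1}{LA\gamma_k D_k},
\]
both of which I would establish by first writing $\gamma_t/D_t\ge \gamma_t^2/(\gamma_{\max}D_t)$ on each sub-range and then invoking the telescoping identity $LA\gamma_t^2/D_t=1/D_{t-1}-1/D_t$, with $\gamma_{\max}$ specialising to $\gamma_1$ on the head segment and to $\gamma_k$ on the tail segment under the standard non-increasing step-size convention consistent with $\gamma_t\le 1/(LB)$. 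Substituting these weight bounds and isolating $\min_{k\le t\le T}r_t$ delivers the first inequality of the theorem.

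The second inequality is then immediate: the two subtracted terms $2\delta_T/D_T$ and $\min_{1\le t\le k-1}r_t/(LA\gamma_1)$ are non-negative, so dropping them can only enlarge the right-hand side. The chief technical obstacle is the two weight lower bounds used in the splitting step; the delicate point is making the proportionality constant $\gamma_{\max}$ on each sub-range collapse correctly to $\gamma_1$ on the head and $\gamma_k$ on the tail, after which everything else is mechanical telescoping with the product factors $D_t$.
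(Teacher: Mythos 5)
Your proposal is essentially the paper's own argument. The paper likewise starts from the weighted one-step inequality multiplied by $P_{t+1}=\prod_{i=t+1}^T(1+LA\gamma_i^2)$, retains the $-\delta_T$ term, splits the resulting weighted sum at $k$, lower-bounds the head sub-sum by $\min_{1\le t\le k-1}r_t$ times a weight that collapses to $D_T/(LA\gamma_1)$ and the tail sub-sum by $\min_{k\le t\le T}r_t$ times a weight that collapses to $D_T/(LA\gamma_k D_k)$ via the same telescoping $1/D_{t-1}-1/D_t=LA\gamma_t^2/D_t$ together with a non-increasing-step-size hypothesis; the only cosmetic difference is that you divide through by $D_T$ before splitting, whereas the paper carries the $D_T$ factor until the final line.
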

In the subsequent analyses, we shed more light on the above result by considering different stepsizes. 


\textbf{Constant step-size $\gamma_t=\gamma$.}
Denote $D\eqdef (1+L\gamma^2A), E\eqdef \gamma(1-LB\gamma/2), F\eqdef\frac{L\gamma^2C}{2},$ and $W=\sum_{t=0}^TD^t.$ By using (\ref{eq:sgd ineq main-es}), the existing results show, $\min_{t\in[T]}r_t\to 0.$ First, we will improve this result. 

Unrolling the recurrence in (\ref{eq:sgd ineq main-es}), and using the above mentioned notations, we have:
\begin{equation}\label{eq:sgd ineq main-3star-es}
\sum_{t=0}^T{(1+L\gamma^2 A)^{T-t}}r_t +\frac{2\delta_{T+1}}{\gamma (2-L\gamma B)} 
\le \frac{2\delta_{0}(1+L\gamma^2A)^{T+1}}{\gamma(2-LB\gamma)}
+\frac{C[(1+L\gamma^2A)^{T+1}-1]}{\gamma A(2-LB\gamma)}.~\;\;
\end{equation}
Let $\eta\in (0,1]$. The 
LHS in the inequality (\ref{eq:sgd ineq main-3star-es}) is bounded from below by 
\begin{equation*}
\min_{(1-\eta)T\leq t\leq T} r_t
\sum_{(1-\eta)T\leq t\leq T} (1+L\gamma^2 A)^{T-t}
\geq (\eta T-1)\min_{(1-\eta)T\leq t\leq T} r_t;
\end{equation*}
if ${LB\gamma \leq 1}$ and ${(1+L\gamma^2 A)^{T+1}\leq 3}$ then the 
RHS of (\ref{eq:sgd ineq main-3star-es}) could be bounded from above by
\begin{equation}\label{upper-bound}
\frac{6\delta_0}{\gamma}+\frac{2C}{\gamma A}.
\end{equation}
Hence, we obtain
\begin{equation}\label{x1:sgd}
\min_{(1-\eta)T\leq t\leq T} r_t\leq 2\left(3{\delta_0}+\frac{C}{ A}\right)\frac{1}{(\eta T-1)\gamma}.
\end{equation}
Now, letting ${\gamma:=\sqrt{\frac{\ln 3}{(T+1)LA}},}$ we can show the following result; see Appendix \S\ref{Appendix:es_conv} for the proof.

\begin{theorem}\label{theorem:main-es}
Suppose Assumptions \ref{ass:minimum}, \ref{ass:smoothness}, and \ref{ass:ABC} hold. Let $\epsilon>0$ and $\eta\in (0,1]$. If the number of iterations $T\geq 1$ satisfies
$
T\geq \max \left\{\left(\frac{4\sqrt{2LA}(3\delta_0+C/A)}{\varepsilon \eta \sqrt{\ln 3} }\right)^2, \frac{LB^2\ln 3 }{A}-1, \frac{2}{\eta}\right\},
$
then, there exists an 
index ${t\geq (1-\eta)T}$ such that ${\mbE\|\nabla F(x_t)\|^2\le \epsilon.}$
\end{theorem}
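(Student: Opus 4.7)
The plan is to directly exploit the chain of estimates set up in the paragraph just before the theorem statement. Inequality (\ref{x1:sgd}), together with the prescribed choice $\gamma = \sqrt{\ln 3/((T+1)LA)}$, already does most of the work; the remaining task is to verify that the two side conditions $LB\gamma \leq 1$ and $(1+L\gamma^2 A)^{T+1} \leq 3$ used to derive (\ref{upper-bound}) both hold under the stated lower bound on $T$, and then to simplify the resulting rate and force it to be at most $\epsilon$.

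First I would check the two structural conditions. With the chosen $\gamma$, one has $L\gamma^2 A = \ln 3/(T+1)$, so
$$(1+L\gamma^2 A)^{T+1} = \bigl(1 + \tfrac{\ln 3}{T+1}\bigr)^{T+1} \leq e^{\ln 3} = 3$$
holds unconditionally by the standard bound $(1+x/n)^n \leq e^x$. The inequality $LB\gamma \leq 1$ rearranges to $T+1 \geq LB^2 \ln 3/A$, which is exactly the second term in the $\max$ of the hypothesis. With both in hand, the upper bound (\ref{upper-bound}) on the right-hand side of (\ref{eq:sgd ineq main-3star-es}), and hence the conclusion (\ref{x1:sgd}), is valid.

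Next I would use the third term $T \geq 2/\eta$ to replace the factor $\eta T - 1$ in (\ref{x1:sgd}) by $\eta T / 2$ (since $\eta T \geq 2$ implies $\eta T - 1 \geq \eta T / 2$). Plugging the specific $\gamma$ into the resulting inequality and using $T+1 \leq 2T$ inside the $\sqrt{T+1}$ factor produced by $1/\gamma$, one obtains
$$\min_{(1-\eta)T \leq t \leq T} r_t \;\leq\; \frac{4\sqrt{2LA}\,(3\delta_0 + C/A)}{\eta \sqrt{T \ln 3}}.$$
Imposing that this bound be at most $\epsilon$ and solving for $T$ recovers precisely the first term in the $\max$ appearing in the hypothesis. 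Since all three conditions are simultaneously enforced by that $\max$, some index $t \in [(1-\eta)T, T]$ attains $r_t = \mbE\|\nabla F(x_t)\|^2 \leq \epsilon$, which is the claimed statement.

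The only real obstacle is bookkeeping: one has to track three simultaneous requirements (two structural conditions on $\gamma$ needed for (\ref{upper-bound}), plus forcing the final $O(1/\sqrt{T})$ rate below $\epsilon$) and package them into a single $\max$ over $T$. No new mathematical content beyond what was already assembled for (\ref{x1:sgd}) is needed, so the proof should reduce to a short, essentially mechanical computation that verifies each of the three branches of the $\max$ in turn.
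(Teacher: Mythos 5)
Your proposal is correct and follows essentially the same route as the paper's own proof in Appendix~B.2: starting from inequality~(\ref{x1:sgd}), checking the two structural conditions $(1+L\gamma^2 A)^{T+1}\leq 3$ and $LB\gamma\leq 1$, using $T\geq 2/\eta$ to replace $\eta T-1$ by $\eta T/2$ and $T+1\leq 2T$ to simplify, and then solving the resulting $O(1/\sqrt{T})$ bound for $T$. The three branches of the $\max$ you identify correspond exactly to the three inequalities the paper verifies, so this is the same argument with identical bookkeeping.
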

By controlling the stepsize parameter, Theorem \ref{theorem:main-es} shows the $\epsilon$-stationary points exist in the final iterates of SGD for minimizing nonconvex functions.
\begin{remark}\label{remark:stationary}
Let ${\epsilon>0}.$ For choice of arbitrary ${\eta \in (0,1)}$, and ${T=\Omega\left(\max\{\frac{1}{\eta},\frac{1}{\eta^2\epsilon^2}\}\right),}$ there exists a ${t\in[(1-\eta)T,T]},$ such that, ${\mbE\|\nabla F(x_t)\|^2\le \epsilon}$. 
E.g., take $\eta=0.05$ in the Theorem above. Then we know that the last $5\%$ steps in the $T$ iterations will produce at least one $\varepsilon$-stationary point. 
For ${\eta=1}$ in Theorem \ref{theorem:main-es}, we recover the classical asymptotic convergence rate of SGD, that is, ${\min_{t\in[T]}\mbE\|\nabla F(x_t)\|^2 = O\left(\frac{1}{\sqrt{T}}\right).}$  
\end{remark}
\begin{remark}
Our choice of ${\gamma}$ makes the expression ${(1+L\gamma^2 A)^{T+1}}$ contained in ${[\sqrt{3},3]}$ to the right 
of $1$ on the real line. Any stepsize such that the expression is  contained in an interval on the right 
of $1$ will work, only with the difference in the constants in the estimations.
\end{remark}

\textbf{Decreasing step-size.}
We consider stepsize ${\gamma_t=\frac{\gamma_0}{\sqrt{t+1}}}$ with ${\gamma_0>0}$, and adopt a slightly different technique.~Inspired by \cite{stich2020error}, we define a non-negative, decreasing weighting sequence, $\{w_t\}_{t=0}^T$, such that $w_{-1}=1$ and  $w_t\eqdef\frac{w_{t-1}}{(1+L\gamma_{t}^2A)}.$ Note that, the weights do not appear in the convergence result. With these weights, and by using the notations before, we can rewrite (\ref{eq:sgd ineq main-es}) as:
\begin{equation*}\label{eq:sgd ineq main-ds-1}
w_t\gamma_t(1-\frac{LB\gamma_t}{2})r_t \le w_t(1+L\gamma_t^2A)\delta_t-w_t\delta_{t+1} +\frac{w_tL\gamma_t^2C}{2}.
\end{equation*}
Taking summation on above from $t=0$ to $t=T$, we have
\begin{eqnarray}\label{eq:sgd ineq main-ds-sum}
\sum_{t=0}^Tw_t\gamma_t(1-\frac{LB\gamma_t}{2})r_t\le \delta_0+\frac{LC}{2}\sum_{t=0}^Tw_t\gamma_t^2.
\end{eqnarray}
The 
RHS of (\ref{eq:sgd ineq main-ds-sum}) is bounded above by
\begin{eqnarray}\label{eq:upper_bd}
\delta_0+\frac{LC}{2}\gamma_0^2(\ln(T+1)+1).
\end{eqnarray}
Following the same technique as in the constant stepsize case, the 
LHS of (\ref{eq:sgd ineq main-ds-sum}) is bounded from below by 
\begin{eqnarray}\label{eq:lower_bd}
&(1-LA\gamma_0^2\ln(T+1))\min_{(1-\eta)T\leq t\leq T} r_t(\gamma_0(1-
\sqrt{1-\eta})\sqrt{T+1}-\frac{LB\gamma_0^2}{2}\ln(T+1)\nonumber\\
&+\frac{LB\gamma_0^2}{2}\ln([(1-\eta)T]+1)).
\end{eqnarray}
Combining (\ref{eq:upper_bd}) and (\ref{eq:lower_bd}), we can state the following Theorem; see Appendix \S\ref{Appendix:es_conv} for the proof.  
\begin{theorem}\label{theorem:main-es-decreasing} Suppose Assumptions \ref{ass:minimum}, \ref{ass:smoothness}, and \ref{ass:ABC} hold. Let ${\eta\in (0,1]}$. By choosing the stepsize $\gamma_t=\frac{\gamma_0}{\sqrt{t+1}}$ with  $\gamma_0^2<\frac{1}{LA\ln(T+1)}$, there exists a step $t\geq (1-\eta)T$ such that ${\mbE\|\nabla F(x_t)\|^2\le \frac{F_0-F_\star+\frac{LC\gamma_0^2}{2}(\ln(T+1)+1)}{(1-LA\gamma_0^2\ln(T+1))\cC(t)}},$
where 
$\cC(t) \eqdef (\gamma_0\eta\sqrt{T+1}-\frac{LB\gamma_0^2}{2}\ln(T+1) +\frac{LB\gamma_0^2}{2}\ln([(1-\eta)T]+1)).$
\end{theorem}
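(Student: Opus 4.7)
The plan is to flesh out the skeleton already sketched in the paragraph preceding the theorem, turning the three inequalities (\ref{eq:sgd ineq main-ds-sum}), (\ref{eq:upper_bd}), (\ref{eq:lower_bd}) into a rigorous derivation and then isolating $\min_{(1-\eta)T\leq t\leq T} r_t$.

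First, I start from the key one-step inequality (\ref{eq:sgd ineq main-es}) and multiply both sides by $w_t$, with $w_{-1}=1$ and $w_t=w_{t-1}/(1+LA\gamma_t^2)$. By construction $w_t(1+LA\gamma_t^2)=w_{t-1}$, so the RHS becomes $w_{t-1}\delta_t-w_t\delta_{t+1}+w_tLC\gamma_t^2/2$, which telescopes cleanly when summed from $t=0$ to $t=T$; dropping the nonnegative end term $w_T\delta_{T+1}$ yields (\ref{eq:sgd ineq main-ds-sum}) with initial term $\delta_0=F_0-F_\star$.

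Second, for the upper bound on the RHS, I use $w_t\leq 1$ together with the standard integral estimate $\sum_{t=0}^T (t+1)^{-1}\leq\ln(T+1)+1$ to obtain $\sum_{t=0}^T w_t\gamma_t^2\leq\gamma_0^2(\ln(T+1)+1)$, giving (\ref{eq:upper_bd}) directly.

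Third, and this is the main technical step, I pull $\min_{(1-\eta)T\leq t\leq T}r_t$ out of the LHS of (\ref{eq:sgd ineq main-ds-sum}). I first drop the indices with $t<(1-\eta)T$; the remaining summands are nonnegative since $\gamma_0^2<1/(LA\ln(T+1))$ ensures $\gamma_t\leq 1/(LB)$ for $T$ large enough (with $B=0$ being trivial). I then replace $r_t$ by its minimum over the tail. For $w_t$, I use monotonicity $w_t\geq w_T$ and the estimate $\log w_T^{-1}=\sum_{t=0}^T\log(1+LA\gamma_t^2)\leq LA\sum_{t=0}^T\gamma_t^2\leq LA\gamma_0^2(\ln(T+1)+1)$ followed by $e^{-x}\geq 1-x$ to get the factor $(1-LA\gamma_0^2\ln(T+1))$ appearing in the denominator. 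For $\sum_{t=(1-\eta)T}^T\gamma_t$ I use the integral bound $\gamma_0\int_{(1-\eta)T}^{T+1}(x+1)^{-1/2}\,dx=2\gamma_0(\sqrt{T+2}-\sqrt{(1-\eta)T+1})$, and for the correction $-(LB\gamma_0^2/2)\sum\gamma_t^2$ I use the partial tail $\sum_{t=(1-\eta)T}^T\gamma_t^2\leq\gamma_0^2(\ln(T+1)-\ln([(1-\eta)T]+1))$. These pieces assemble into exactly the expression $\cC(t)$ in the theorem statement. Rearranging
\[
(1-LA\gamma_0^2\ln(T+1))\,\cC(t)\,\min_{(1-\eta)T\leq t\leq T} r_t\leq\delta_0+\tfrac{LC\gamma_0^2}{2}(\ln(T+1)+1)
\]
yields the stated bound, and the existence of an index $t\geq(1-\eta)T$ realizing this minimum completes the proof.

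\textbf{Main obstacle.} The trickiest bookkeeping step is ensuring both the factor $(1-LA\gamma_0^2\ln(T+1))$ and $\cC(t)$ stay strictly positive while the constants line up with (\ref{eq:lower_bd}). The stepsize condition $\gamma_0^2<1/(LA\ln(T+1))$ is precisely what keeps the first factor positive, and the integral estimates above supply a positive $\cC(t)$ for every fixed $\eta\in(0,1]$ once $T$ is large enough; all nontrivial work lies in carefully propagating these positivity conditions through the log-and-square-root arithmetic, since the weights $w_t$ and the range of summation interact nontrivially with the decreasing $\gamma_t$.
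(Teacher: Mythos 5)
Your proof follows essentially the same route as the paper's: telescope the weighted recurrence (\ref{eq:sgd ineq main-es}) via the weights $w_t$ to obtain (\ref{eq:sgd ineq main-ds-sum}), upper-bound the right side with the harmonic-sum integral estimate, lower-bound the left side by restricting the sum to the tail $[(1-\eta)T,T]$ and pulling out $\min_{(1-\eta)T\leq t\leq T}r_t$, and then estimate $\sum_t\gamma_t$, $\sum_t\gamma_t^2$, and $w_T$ by integral comparisons. The only place you genuinely depart from the paper is the lower bound on $w_T$: the paper applies the arithmetic--geometric mean inequality to $\prod_{t=0}^T(1+LA\gamma_t^2)$ and then uses $(1+x/n)^n\leq e^x\leq(1-x)^{-1}$, whereas you pass directly to logarithms, use $\log(1+u)\leq u$ termwise, and then invoke $e^{-x}\geq 1-x$. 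Both are elementary and arrive at the same denominator factor $(1-LA\gamma_0^2\ln(T+1))$; your version is a touch more direct and avoids the detour through AM--GM, but it is a cosmetic rather than structural difference. One bookkeeping note: as written, your chain actually yields $1-LA\gamma_0^2(\ln(T+1)+1)$ before you quote $(1-LA\gamma_0^2\ln(T+1))$; the paper commits an analogous elision (it bounds $\sum_{t=0}^T(t+1)^{-1}$ by $\ln(T+1)$ rather than $\ln(T+1)+1$ at the corresponding step), so this is a shared loose end in the constants, not a gap unique to your argument. The remaining estimates and the final rearrangement into the stated bound match the paper line for line.
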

\begin{remark}\label{remark:stationary}
From the previous Theorem we have for all fixed  $\eta$ in $(0,1]$ there exists a ${t\in[(1-\eta)T,T]},$ such that ${\mbE\|\nabla F(x_t)\|^2 = 
O\left(\frac{\ln(T+1)}{\sqrt{T+1}}\right).}$
For ${\eta=1}$, we recover the classical asymptotic convergence rate of SGD.
\end{remark}
This result is also an improvement over the existing nonconvex convergence results of SGD for decreasing stepsize \citep{khaled2020better, stich2020error}. 
See Corollary \ref{cor:general_analysis_sgd:decreasing:k} in \S\ref{appendix:proofs} for decreasing stepsize of the form $\gamma_{t}=\gamma t^{-\alpha}$ with $\alpha\in(1/2,1).$ One could think of other stepsizes in Theorem~\ref{thm:general_analysis_sgd:k}, e.g., $O(1/\sqrt{t})$, $O(1/\ln(t)\sqrt{t}),$ and so on, and find the same guarantee.

\begin{myblock}{Open questions}
\small{\myNum{i}~For SGD, without an iteration budget,~$T$, are there practical ways to detect the first iterate,~$t$, for which ${\E{\|\nabla{f}(x_t)\|^2}\leq\epsilon}?$ \myNum{ii}~How to terminate SGD other than the maximum number of iterations?}
\end{myblock}

\textbf{Legend I: As long as (\ref{eq:sgd ineq main-es}) is used as a key descent inequality, there is no better convergence result.} Recall (\ref{eq:sgd ineq main-es}) is the key inequality to prove the convergence of SGD for nonconvex and $L$-smooth functions; see Theorem \ref{thm:general_analysis_sgd}. We modified (\ref{eq:sgd ineq main-es}), and give the convergence of SGD in the tail for $\eta T$ iterations with ${\eta \in (0,1)}$. For ${\eta \in (0,1)}$, one could think of taking $k$ in specific form in Theorem~\ref{thm:general_analysis_sgd:k}, e.g., $k=(1-\eta)T$, $k=(1-1/\eta)T$, $k=T-\eta\ln(T)$ or $k=T-\ln(T),$ and get a $O(\frac{1}{\sqrt{T}})$ convergence rate for the tail. However, if $k=T-\cC$, where $\cC$ is a constant, the rate disappears. Therefore, we must design a better descent inequality than (\ref{eq:sgd ineq main-es}). 

\textbf{Myth II: Different stepsize choices result in novel convergence guarantees for SGD.}
Several works propose a plethora of stepsize choices to boost the convergence of SGD \citep{gower21a, loizou2021stochastic, schaipp2023a}; \citep{wang2021convergence} provide nonconvex convergence results for step decay. 
Since the quantity for convergence and the key descent step is problematic, different stepsize choices result only in incremental improvement other than a giant leap.  

\textbf{Myth III: Better assumptions in upper bounding the stochastic gradients result in better convergence.}
Recently, \cite{allen2019convergence} showed why first-order methods such as SGD can find global minima on the training objective of multi-layer DNNs with ReLU activation and almost surely achieves close to zero training error in polynomial time. It was possible because the authors proved two instrumental properties of the loss, $F$: \myNum{i} gradient
bounds for points that are sufficiently close to the random
initialization, and \myNum{ii} semi-smoothness. 

Assumptions on the bounds of the stochastic gradient are important, but this work does not judge which is better than the other as the literature has established potentially many interplays between them. Nevertheless, no significant improvements are possible following the present approach unless we propose completely new Assumptions, which should also be practical; but they are not easy to make. New assumptions on the upper bound of the stochastic gradients would only lead to minor improvements. 

\begin{myblock}{Open questions}
\cite{yu2021analysis} assume the dissipativity property of the loss function, $F$ (less restrictive than convexity), and obtained guarantees on the last iterate of SGD. \myNum{i} What is the biggest class of nonconvex functions for which we can still have such guarantees? \myNum{ii} What are the minimal additional assumptions on $F$ to ensure the convergence of the last iterate?
\end{myblock}


\textbf{Legend II: If SGD is run for many iterates, then the tail will {\em almost surely} produce $\epsilon$-stationary points.}~It is almost impossible to prove the last iterate convergence of SGD for nonconvex functions (see discussion in \cite{dieuleveut2023stochastic}), but empirical evidence tells that the stationary points are mostly concentrated at the tail \citep{shalev2011pegasos}.~Nevertheless, how can we formalize it? The central idea in Theorem \ref{theorem:main-es} is to bound a weighted sum of the gradient norms over all iterations from below by a partial sum over the last $\eta T$ iterations. Moreover, the result is not only mathematically stronger than the existing ones, but the simple trick of the partial sum over the last $\eta T$ iterations leads us to another significant result. 

We know $S_{\epsilon,\eta}\not = \emptyset$ by Theorem~\ref{theorem:main-es}.~On one hand, we have
\begin{eqnarray}\label{eq:e_st-es}
&&{\sum_{t=(1-\eta)T}^T(1+L\gamma^2 A)^{T-t}r_t>\sum_{t\in S_{\epsilon}^c\atop t\geq (1-\eta) T}(1+L\gamma^2 A)^{T-t}r_t >\sum_{t = (1-\eta)T + |S_{\epsilon, \eta}|}^T(1+L\gamma^2 A)^{T-t}\epsilon} \nonumber\\
&\geq&{\frac{(1+L\gamma^2 A)^{\eta T - |S_{\epsilon, \eta}| }-1}{L\gamma^2 A}
  \epsilon},
\end{eqnarray}
where $|S_{\epsilon,\eta}|$ denotes the cardinality of the set $S_{\epsilon, \eta}.$
Note that, $\sum_{t\in S_{\epsilon}^c\atop t\geq (1-\eta) T}(1+L\gamma^2 A)^{T-t}r_t$ has $(\eta T-|S_{\epsilon,\eta}|+1)$ terms; so, we lower bound them with the smallest of those many terms. 
On the other hand, using (\ref{eq:sgd ineq main-3star-es}) and (\ref{upper-bound}) to bound the left hand side of (\ref{eq:e_st-es}), and rearranging the terms we obtain
$$(1+L\gamma^2 A)^{\eta T - |S_{\epsilon, \eta}| }
  \leq \frac{{6\delta_0L\gamma A}+{2CL\gamma}}{\epsilon}+1.
$$
Taking logarithm to the previous inequality and rearranging the terms we get the following theorem:
\begin{theorem}\label{theorem:density}
Suppose Assumptions \ref{ass:minimum}, \ref{ass:smoothness}, and \ref{ass:ABC} hold. Let $\epsilon>0$ and $\eta\in (0,1]$. For constant stepsize $\gamma_{t}=\gamma$, we have
$
\frac{|S_{\epsilon, \eta}|}{\eta T}\geq 1-
\frac{1}{ T}\frac{\ln\left(\frac{{6\delta_0L\gamma A}+{2CL\gamma}}{\epsilon}+1 \right)}{\eta\ln\left( 1+L\gamma^2 A \right)}.
$
\end{theorem}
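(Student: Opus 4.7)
The plan is to lift the tail-sum analysis already developed in the constant-stepsize proof of Theorem~\ref{theorem:main-es} and combine it with a pigeonhole-style lower bound on the contribution of the non-stationary indices. All of the underlying inequalities are in hand; the task is to extract cardinality information from the weighted sum $\sum_{t=(1-\eta)T}^{T} (1+L\gamma^2 A)^{T-t} r_t$ rather than a single minimum.

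The starting point is the upper bound
$$\sum_{t=(1-\eta)T}^{T} (1+L\gamma^2 A)^{T-t} r_t \leq \frac{6\delta_0}{\gamma} + \frac{2C}{\gamma A},$$
which follows from the unrolled recurrence \eqref{eq:sgd ineq main-3star-es} together with the upper bound \eqref{upper-bound} (valid under $LB\gamma\leq 1$ and $(1+L\gamma^2 A)^{T+1}\leq 3$, already enforced by the stepsize choice). For the matching lower bound, I would discard the terms whose index lies in $S_\epsilon$ and keep only those tail indices where $r_t>\epsilon$; there are exactly $\eta T - |S_{\epsilon,\eta}|$ of them (modulo rounding). To obtain a lower bound that is independent of \emph{where} inside the tail these indices sit, I would place them at the positions with the smallest weights $(1+L\gamma^2 A)^{T-t}$, namely $t=(1-\eta)T+|S_{\epsilon,\eta}|,\ldots,T$. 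Summing the resulting geometric series gives
$$\sum_{t=(1-\eta)T+|S_{\epsilon,\eta}|}^{T}(1+L\gamma^2 A)^{T-t}\epsilon \;\geq\; \frac{(1+L\gamma^2 A)^{\eta T - |S_{\epsilon,\eta}|}-1}{L\gamma^2 A}\,\epsilon,$$
exactly as in \eqref{eq:e_st-es}.

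Chaining the lower and upper bounds yields
$$(1+L\gamma^2 A)^{\eta T - |S_{\epsilon,\eta}|} \;\leq\; \frac{6\delta_0 L\gamma A + 2CL\gamma}{\epsilon}+1,$$
and the conclusion follows by taking logarithms on both sides, dividing by $\eta T\,\ln(1+L\gamma^2 A)$, and rearranging. The only subtle step is the ``worst-case placement'' argument used for the lower bound: one has to justify that replacing the true (unknown) positions of the non-stationary indices in the tail by the positions with the smallest exponential weights can only decrease the sum, which is immediate because $(1+L\gamma^2 A)^{T-t}$ is decreasing in $t$. Up to this bookkeeping (and a harmless integer rounding in $|S_{\epsilon,\eta}|$), the rest of the argument is purely algebraic and reuses Theorem~\ref{theorem:main-es}'s bounds verbatim.
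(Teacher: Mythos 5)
Your proposal reproduces the paper's proof essentially verbatim: the same upper bound from \eqref{eq:sgd ineq main-3star-es} and \eqref{upper-bound}, the same worst-case-placement argument putting the non-stationary tail indices at the smallest exponential weights, the same geometric-series lower bound, and the same conclusion by taking logarithms. The only discrepancies are harmless off-by-one bookkeeping (the paper counts $\eta T - |S_{\epsilon,\eta}| + 1$ non-stationary terms in the tail where you write $\eta T - |S_{\epsilon,\eta}|$), which you flag yourself under ``integer rounding.''
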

From the previous Theorem, we see that the density of the $\epsilon$-stationary points in the top $\eta$ portion of the tails approaches $1$ as $T$ increases, which roughly speaking, tells us that for $T$ large enough almost all the iterations $x_t$ for $t\in [(1-\eta)T, T]$ are $\epsilon$-stationary points.

Recall, from Theorem \ref{theorem:main-es}, if the total number of iterations, $T$ is large enough then there will be iterate, ${t\in[(1-\eta)T,T]}$ to produce ${\mbE\|\nabla F_{t}\|^2\le \epsilon},$ where ${\eta\in(0,1].}$ That is, Theorem \ref{theorem:main-es} guarantees the existence of (at least one) stationary point(s) in the final iterates. However, one can argue that the effect of $\eta$  in the complexity can make it worse when $\eta$ is small, which requires the SGD to run for a sufficiently large number of iterations. Whereas the claim ${\frac{|S_{\epsilon, \eta}|}{\eta T}\to 1}$ as ${T\to\infty}$, says that running SGD for a large number of iterations, $T$ is not necessarily problematic, as almost surely the density of the stationary points in the tail portion will approach to 1, guaranteeing the entire tail comprising mostly of $\epsilon$-stationary points; see Figure \ref{fig:log_logistic-concentration}. 

Similarly, for the decreasing stepsize case, from Theorem~\ref{theorem:main-es-decreasing}, we have $S_{\epsilon,\eta}\not = \emptyset$. For $T$ large enough, we can lower bound the left side of the inequality (\ref{eq:sgd ineq main-ds-sum}) as 
\begin{eqnarray*}
&&\sum_{t=0}^T w_t\gamma_t(1-\frac{LB\gamma_t}{2})r_t
\ge \epsilon w_T\sum_{t\in S_{\epsilon}^c\atop t \geq (1-\eta) T}\left(\gamma_t-\frac{LB\gamma_t^2}{2}\right)
\geq  \epsilon (1+L\gamma_0^2A(T+1))(\gamma_0\sqrt{T+1}\\
&&-\gamma_0\sqrt{(1-\eta)T+|S_{\epsilon,\eta}|}-\frac{LB\gamma_0^2}{2}\ln(T+1)).
\end{eqnarray*}
The above, combined with the upper bound in (\ref{eq:upper_bd}) can be written as 
\begin{eqnarray*}
\gamma_0(\sqrt{T+1}-\sqrt{(1-\eta)T+|S_{\epsilon,\eta}|})
\le \underbrace{\frac{\delta_0+\frac{LC}{2}\gamma_0^2(\ln(T+1)+1)}{(1+L\gamma_0^2A(T+1))}+\frac{LB\gamma_0^2}{2}\ln(T+1)}_{\eqdef \cD},
\end{eqnarray*}
which can be further reduced to 
\begin{eqnarray}
\frac{|S_{\epsilon, \eta}|}{\eta T}\geq 1-2\frac{\cD}{\gamma_0\eta \sqrt{T}}+\frac{\cD^2}{\gamma_0^2\eta T}.
\end{eqnarray}
Formally, we summarize it in the following theorem.  
\begin{theorem}\label{theorem:density-dss}
Suppose Assumptions \ref{ass:minimum}, \ref{ass:smoothness}, and \ref{ass:ABC} hold. Let $\epsilon>0$ and $\eta\in (0,1]$. For decreasing step size, ${\gamma_t=\frac{\gamma_0}{\sqrt{t+1}}}$ with ${\gamma_0>0}$, we have
$
\frac{|S_{\epsilon, \eta}|}{\eta T}\geq 1-O\left(\frac{1}{ \sqrt{T}}\right).
$
\end{theorem}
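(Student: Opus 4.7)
The plan is to parallel the constant-stepsize argument that yielded Theorem \ref{theorem:density}, but adapted to the decreasing schedule $\gamma_t=\gamma_0/\sqrt{t+1}$. First I would invoke Theorem~\ref{theorem:main-es-decreasing} to guarantee $S_{\epsilon,\eta}\neq \emptyset$ for $T$ large enough, and then return to the weighted descent inequality (\ref{eq:sgd ineq main-ds-sum}) together with its logarithmic upper bound (\ref{eq:upper_bd}). The underlying idea is that if the non-stationary indices $S_\epsilon^c \cap [(1-\eta)T, T]$ are too numerous, then the LHS of (\ref{eq:sgd ineq main-ds-sum}) is too large to be compatible with the RHS, which forces a quantitative lower bound on $|S_{\epsilon,\eta}|$.

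Next I would discard every term in the LHS of (\ref{eq:sgd ineq main-ds-sum}) whose index lies either outside the tail window $[(1-\eta)T,T]$ or inside $S_\epsilon$, and lower bound each surviving $r_t$ by $\epsilon$. By monotonicity of $w_t$, I can pull out a common factor at least as large as $w_T$, and control $1/w_T=\prod_{t=1}^T(1+LA\gamma_t^2)$ from above using $\sum \gamma_t^2 = O(\ln T)$, so that the $w_T$ factor contributes only a bounded loss under the stepsize restriction $\gamma_0^2<1/(LA\ln(T+1))$ from Theorem \ref{theorem:main-es-decreasing}. What remains is a sum of $\eta T+1-|S_{\epsilon,\eta}|$ terms of the form $\gamma_t-LB\gamma_t^2/2$ over an unknown subset $J$ of $[(1-\eta)T,T]$. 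Since $\gamma_t$ is decreasing, the adversarial configuration places the surviving indices at the right end of the window; an integral comparison for $\sum_{t\in J}1/\sqrt{t+1}$ then yields a lower bound proportional to $\sqrt{T+1}-\sqrt{(1-\eta)T+|S_{\epsilon,\eta}|}$, minus a $\ln(T+1)$ correction produced by the $LB\gamma_t^2$ piece.

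Combining the two sides leaves an inequality of the shape $\gamma_0\bigl(\sqrt{T+1}-\sqrt{(1-\eta)T+|S_{\epsilon,\eta}|}\bigr)\leq \mathcal{D}$, where $\mathcal{D}$ bundles $\delta_0$, the $\ln(T+1)$ term from $\sum w_t\gamma_t^2$, and the $LB\gamma_0^2\ln(T+1)/2$ remainder, all rescaled by the control on $w_T$. Isolating $\sqrt{(1-\eta)T+|S_{\epsilon,\eta}|}$, squaring, and expanding $(a-b)^2$ gives $(1-\eta)T+|S_{\epsilon,\eta}|\geq T+1-2\sqrt{T+1}\,\mathcal{D}/\gamma_0+\mathcal{D}^2/\gamma_0^2$, and dividing by $\eta T$ produces the stated bound $|S_{\epsilon,\eta}|/(\eta T) \geq 1-2\mathcal{D}/(\gamma_0\eta\sqrt{T})+\mathcal{D}^2/(\gamma_0^2\eta T) = 1-O(1/\sqrt{T})$, exactly matching the intermediate display preceding the theorem.

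The main obstacle I expect is not conceptual but a bookkeeping balancing act: the logarithmic factors gathered into $\mathcal{D}$ must remain subdominant to the $\sqrt{T}$ main term, which forces the stepsize constraint $\gamma_0^2<1/(LA\ln(T+1))$ to be threaded consistently through both the control of $w_T$ and the worst-case placement argument for $\sum_{t\in J}\gamma_t$. If the $LB\gamma_t^2/2$ subtraction is bounded too loosely, its $\ln(T+1)$ residual can swamp the $\sqrt{T+1}-\sqrt{(1-\eta)T+|S_{\epsilon,\eta}|}$ gain and destroy the rate; the delicate point is verifying that the adversarial placement of $S_\epsilon^c$ at the right tail does not simultaneously sharpen the $LB\gamma_t^2$ loss in a way that cancels the main lower bound.
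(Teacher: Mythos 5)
Your plan reproduces the paper's argument step by step: invoke Theorem~\ref{theorem:main-es-decreasing} for non-emptiness, restrict the weighted descent sum (\ref{eq:sgd ineq main-ds-sum}) to the non-stationary tail indices, pull out $w_T$, place the non-stationary indices adversarially at the right end so an integral comparison yields $\gamma_0\bigl(\sqrt{T+1}-\sqrt{(1-\eta)T+|S_{\epsilon,\eta}|}\bigr)\leq\cD$, then solve for $|S_{\epsilon,\eta}|/(\eta T)$, arriving at exactly the display the paper records before the theorem. This is the same approach as the paper's proof, and your bookkeeping concerns in the final paragraph are correctly resolved since the $\ln(T+1)$ residuals inside $\cD$ are indeed subdominant to the $\sqrt{T}$ main term.
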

Similar to the argument for constant stepsize case, we conclude that the density of the $\epsilon$-stationary points in the top $\eta$ portion of the tail approaches to $1$ as $T$ increases. 
\begin{remark}\label{remark:decreasing_stationary}
 For stepsizes of the form $\gamma_{t}=\gamma t^{-\alpha}$ with $\alpha\in(1/2,1),$ and $\gamma\in\R^+$, or other stepsizes, e.g., $O(1/\sqrt{t})$, $O(1/\ln(t)\sqrt{t}),$ and so on, we can find similar guarantee.   
\end{remark}


\textbf{Connection with the high probability bounds.} The {\em high probability} convergence results, first used by \cite{kakadetiwari2008}, and then in \cite{harvey19a, harvey2019simple}, proved convergence bounds for different class of loss functions (e.g., Lipschitz and strongly convex, but not necessarily differentiable; Lipschitz and convex, but not necessarily strongly convex or differentiable) by using rate Martingale and generalized Freeman's inequality. For Lipschitz and strongly convex functions, but not necessarily differentiable, \cite{harvey19a} proved that after $T$ steps of SGD, the error of the final iterate is $O(\log(T)/T)$ with high probability; also, see \cite{de2015global} for global convergence of SGD for matrix completion and related problems. \cite{cutkosky2021high} provided high-probability bounds for nonconvex stochastic optimization with heavy tails.~We refer to~\cite{arjevani_lowerbound} and references therein for the lower bound on the complexity of finding an $\epsilon$-stationary point using stochastic first-order methods. \cite{harvey19a} mentioned ``high probability bounds are more difficult," as it controls the total noise of the noisy (sub)-gradients at each iteration, and not obvious that the error of the final iterate is tightly concentrated. To our knowledge, there is no trivial analysis. In contrast, we show the concentration of $\epsilon$-stationary point, ${{|S_{\epsilon, \eta}|}/{\eta T}}$ in the final iterates of nonconvex SGD as $T$ increases, approaches to 1, using simple arguments. Similar to \cite{harvey19a}, our result can be generalized for SGD with suffix averaging or weighted averaging. 


\begin{figure}
    \centering
    \includegraphics[width=0.7\textwidth]{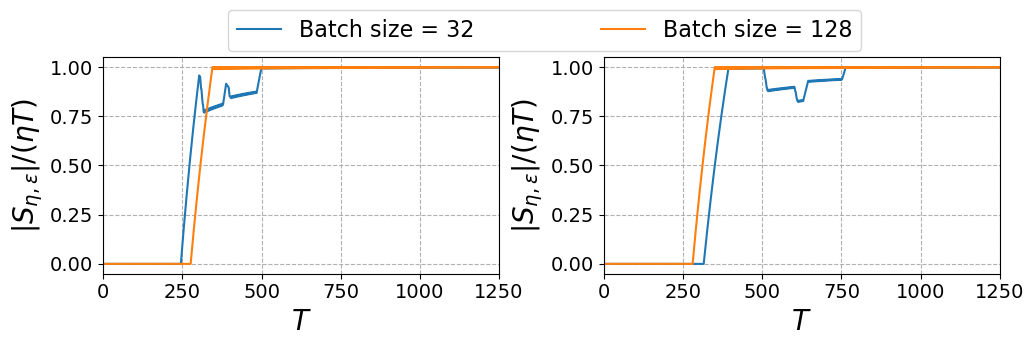}
    \caption{\small{Concentration of the $\epsilon$-stationary points, ${{|S_{\epsilon, \eta}|}/{\eta T}}$ vs. Iterations for nonconvex logistic regression problem running SGD (left) and {\em random reshuffling SGD} (RR-SGD) (right) for choice of ${\epsilon=10^{-2}}$ and ${\eta=0.2}$. 
    See theoretical convergence result of RR-SGD in Appendix \ref{Appendix:rrsgd_conv}. As $T$ increases, ${{|S_{\epsilon, \eta}|}/{\eta T}\to1}$ from below. The tail will {\em almost surely} produce $\epsilon$-stationary points and their density in the tail will approach to 1---It confirms our theoretical finding in Theorem \ref{theorem:density}.}}
    \label{fig:log_logistic-concentration}
\end{figure}

\section{Numerical evidence}\label{sec:numerical results}
We conduct experiments on nonconvex functions with $L$-smooth and non-smooth (for DNNs) loss to substantiate our theoretical results that are based on stochastic gradient, $g_t.$ In practice, $g_t$ can be calculated by sampling and processing minibatches of data. Therefore, besides $\| \nabla F(x_t) \|$ and $ F(x_t)$, we also track, the norm of the minibatch stochastic gradient, $\| \nabla F_{\mathcal{B}_t} \|$, and minibatch stochastic loss, $F_{\mathcal{B}_t}.$ 
Note that, $\mathcal{B}_t$ is the selected minibatch of data at iteration $t$ and $F_{\mathcal{B}_t}:= \frac{1}{|\mathcal{B}_t|}\sum_{i \in \mathcal{B}_t} f_i(x_t).$ 
\textbf{Nonconvex and $L$-smooth loss.} We consider logistic regression with nonconvex regularization:
$$
{\min_{x \in \mathbb{R}^{d}}\left[F(x)\eqdef\frac{1}{n} \sum_{i=1}^{n} \ln (1+\exp (-a_i^\top x))  + \lambda \sum_{j=1}^{d} \frac{x_j^2}{1+x_j^2}\right]},
$$
where $a_1,a_2,...,a_n \in \mathbb{R}^d$ are the given data, and $\lambda>0$ is the regularization parameter. We run the experiments on the \texttt{a1a} dataset from LIBSVM \citep{libsvm}, where ${n=1605, d=123,}$ and set $\lambda=0.5$. Figures \ref{fig:log_logistic} (and Figure \ref{fig:logistic_rrsgd} in \S\ref{Appendix:rrsgd_conv} for \texttt{RR-SGD}) shows the average of 10 runs of \texttt{SGD} with different minibatch sizes. The shaded area is given by $\pm \sigma$ where $\sigma$ is the standard deviation. 

\textbf{Nonconvex and nonsmooth loss.} We use a feed forward neural network (FNN) for MNIST digit \citep{lecun1998gradient_mnist} classification. The FNN has one hidden layer with 256 neurons activated by ReLU, and an 10 dimensional output layer activated by the softmax function. The loss function is the categorical cross entropy. We calculate the loss and the stochastic gradient during the training by using different minibatches. The entire loss and the full gradient are computed using all ${n=42\times 10^3}$ samples. For the average of 10 runs, the shaded area is given by $\pm \sigma,$ where ${\sigma\geq 0}$ is the standard deviation, and $\gamma$ is the learning rate. Figures \ref{fig:N1} in the main paper (and Figure \ref{fig:N2} in \S\ref{Appendix:rrsgd_conv}) show if the total number of iterations is large enough then the entire tail comprising of the $\epsilon$-stationary points.

\textbf{Concentration of $\epsilon$-stationary points.} For ${\epsilon=10^{-2}}$ and ${\eta=0.2}$, in Figure \ref{fig:log_logistic-concentration}, we plot the density of the $\epsilon$-stationary points, ${{|S_{\epsilon, \eta}|}/{\eta T}}$ as a function of iteration, $T$ for nonconvex logistic regression problems. As $T$ increases,  ${{|S_{\epsilon, \eta}|}/{\eta T}\to1}$ from below, and conform our result in \S\ref{sec:concentartion}.

\section{Conclusion}
To the best of our knowledge, this is the first work that proves the existence of {\em an $\epsilon$-stationary point in the final iterates of SGD in optimizing nonconvex functions}, given a large enough total iteration budget, $T$, and {\em measures the concentration of such $\epsilon$-stationary points} in the final iterates.~Alongside, by using simple reasons, we informally addressed certain myths and legends that determine the practical success of SGD in the nonconvex world and posed some new research questions.

\bibliography{references}
\bibliographystyle{iclr2024_conference}

\appendix
\appendix

\section{Brief literature review---Continued}\label{appendix:literature_review}

\smartparagraph{Convergence of structured convex and strongly convex functions.}
Among the seminal works, \cite{shamir13} were the first to show that the final iterate has expected error $O(\frac{\ln T}{\sqrt{T}})$ for Lipschitz continuous functions and $O(\frac{\ln T}{{T}})$ for strongly convex functions. \cite{pmlr-v99-jain19a} designed a new step size sequence for convex and strongly convex function to enjoy information theoretically optimal bounds on the suboptimality of last point of SGD as well as GD, with high probability.
\cite{harvey19a, harvey2019simple} proved convergence bounds for non-smooth, strongly convex case by using Freeman's inequality.

\smartparagraph{Stability and generalization bound}, that is, estimating the generalization error of learning algorithms is a classic problem and well studied; see for example, \cite{rogers1978finite, bousquet2002stability, feldman2018generalization,hardt2016train, lei2020fine}, among many. 
But stability and generalization of SGD is orthogonal to this work.


\section{Proofs}\label{appendix:proofs}
In this Section, we provide detailed proofs of the theorems and corollaries stated in Section \ref{sec:concentartion}. We sketched the derivation of the results for the density of the $\epsilon$-stationary points in Theorem \ref{theorem:density} and \ref{theorem:density-dss} in the main paper, and we leave it for the readers to reproduce the necessary details based on them.  

\subsection{General convergence analysis of nonconvex SGD --- Classical result}
The proofs provided in this section are the classic convergence analysis of nonconvex SGD, inspired and adapted from the existing literature; see \cite{ghadimi2013stochastic, stich2020error, khaled2020better}. We provide them for completeness. 


\begin{proof}[Proof of Theorem~\ref{thm:general_analysis_sgd}]
Recall the recursive relation (\ref{eq:sgd ineq main-es}) derived using the $L$-smoothness of $F$ and {\it expected smoothness} of the stochastic gradients $g_t$, e.g. see Lemma 2 in \cite{khaled2020better}):
\begin{equation*}
\gamma_{t}\left(1-\frac{LB\gamma_{t}}{2}\right)r_{t} \leq \left(1+LA\gamma_{t}^{2}\right)\delta_{t} -\delta_{t+1}+\frac{LC\gamma_{t}^{2}}{2},
\end{equation*}
where $r_{t}:=\E{\lVert\nabla F_{t}\rVert^{2}}$ and $\delta_{t}:=\E{F_{t}}-\fs$. Iterating the inequality above yields
\begin{align*}
\sum_{t=1}^{T}\prod_{i=t+1}^{T}\left(1+LA\gamma_{i}^{2}\right) \left(\gamma_{t}-\frac{LB\gamma_{t}^{2}}{2}\right)r_{t-1} & \leq \prod_{t=1}^{T}\left(1+LA\gamma_{t}^{2}\right)\delta_{0} -\delta_{T} \\ 
& \quad \quad +\frac{LC}{2} \sum_{t=1}^{T}\prod_{i=t+1}^{T} \left(1+LA\gamma_{i}^{2}\right)\gamma_{t}^{2} \\
& = \left(\delta_{0}+\frac{C}{2A}\right)\prod_{t=1}^{T}\left(1+LA\gamma_{t}^{2}\right) -\delta_{T} \\
& = \left(\delta_{0}+\frac{C}{2A}\right)D_{T} -\delta_{T} \leq  \left(\delta_{0}+\frac{C}{2A}\right)D_{T},
\end{align*}
where $D_{T}:=\prod_{t=1}^{T}\left(1+LA\gamma_{t}^{2}\right)$.
Note, for any learning rate $(\gamma_{t})$ chosen such that $\sum_{t=1}^{\infty}\gamma_{t}^{2}<\infty$, then $D_{\infty}\leq\exp(LA\sum_{t=1}^{\infty}\gamma_{t}^{2})<\infty$.
Next, if we assume $\gamma_{t}\leq1/LB$, then $\gamma_{t}-LB\gamma_{t}^{2}/2\geq\gamma_{t}-\gamma_{t}^{2}/2=\gamma_{t}/2$. This can be used to lower bound the LHS of the inequality above:
\begin{align} \label{eq:thm:1:lower_term}
\sum_{t=1}^{T}\prod_{i=t+1}^{T}\left(1+LA\gamma_{i}^{2}\right)\left(\gamma_{t}-\frac{LB\gamma_{t}^{2}}{2}\right)r_{t-1} \geq \frac{1}{2}\sum_{t=1}^{T}\prod_{i=t+1}^{T}\left(1+LA\gamma_{i}^{2}\right)\gamma_{t}r_{t-1} \geq \frac{1}{2}\sum_{t=1}^{T}\gamma_{t}r_{t-1},
\end{align}
as $\prod_{i=1}^{T}\left(1+LA\gamma_{i}^{2}\right)\geq1$.
\end{proof}

In particular, $\min_{1\leq t \leq T}r_{t} \leq \left(2\delta_{0}+\frac{C}{A}-\frac{2\delta_{T}}{D_{T}}\right)\frac{D_{T}}{\sum_{t=1}^{T}\gamma_{t}} \leq \left(2\delta_{0}+\frac{C}{A}\right)\frac{D_{T}}{\sum_{t=1}^{T}\gamma_{t}}$.
Note that $\min_{1\leq t \leq T}r_{t}\rightarrow0$ as $\sum_{t=1}^{\infty}\gamma_{t}=\infty$ and $\sum_{t=1}^{\infty}\gamma_{t}^{2}<\infty$ (i.e. $D_{\infty}\leq\exp(LA\sum_{t=1}^{\infty}\gamma_{t}^{2})<\infty$ as $1+x\leq\exp(x)$).


The next corollary gives the nonconvex convergence of SGD for constant stepsize. 

\begin{proof}[Proof of Corollary~\ref{cor:general_analysis_sgd:constant}]
First, we use that $\gamma\sum_{t=1}^{T}r_{t-1} \geq \gamma T \min_{1\leq t \leq T} r_{t}$. The rest follows directly by letting $\gamma=\sqrt{\ln(3)/LAT}$ such that $D_{T}=(1+LA\gamma^{2})^{T}=\left(1+\ln(3)/T\right)^{T}\leq e^{\ln(3)}=3$; here we see $T$ as a fixed and known quantity.
Note that the inequality $T\geq \ln(3)LB^2/A$ implies that $\gamma\leq1/LB$ is satisfied.
\end{proof}

The next corollary gives the classic nonconvex convergence of SGD for a special choice of decreasing stepsize.  
\begin{corollary}[Decreasing learning rate of Theorem~\ref{thm:general_analysis_sgd}]  \label{cor:general_analysis_sgd:decreasing}
For decreasing learning rates on the form $\gamma_{t}=\gamma t^{-\alpha}$ with $\alpha\in(1/2,1)$, we have
\begin{align*}
\min_{1\leq t \leq T} r_{t} \leq \frac{1-\alpha}{\gamma T^{1-\alpha}}\left(2\delta_{0}+\frac{C}{A}\right)\exp\left(\frac{2\alpha\gamma^{2}LA}{2\alpha-1}\right).
\end{align*}
\end{corollary}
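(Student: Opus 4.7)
The plan is to specialize the general bound in Theorem~\ref{thm:general_analysis_sgd} to the polynomial schedule $\gamma_{t}=\gamma t^{-\alpha}$ with $\alpha\in(1/2,1)$. First I would verify the admissibility condition $\gamma_{t}\leq1/(LB)$ required by Theorem~\ref{thm:general_analysis_sgd}: since $\gamma t^{-\alpha}$ is maximized at $t=1$, it suffices to assume $\gamma\leq1/(LB)$, which is implicit in the statement. Applying Theorem~\ref{thm:general_analysis_sgd} then gives $\sum_{t=1}^{T}\gamma_{t}r_{t-1}\leq(2\delta_{0}+C/A)D_{T}$, and the remaining work is to upper bound $D_{T}$ and to lower bound $\sum_{t=1}^{T}\gamma_{t}$.

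For $D_{T}=\prod_{t=1}^{T}(1+LA\gamma^{2}t^{-2\alpha})$, I would apply the elementary inequality $1+x\leq e^{x}$ componentwise to obtain $D_{T}\leq\exp\!\bigl(LA\gamma^{2}\sum_{t=1}^{T}t^{-2\alpha}\bigr)$. Here the hypothesis $\alpha>1/2$ enters crucially: it makes $2\alpha>1$, so $\sum_{t\geq1}t^{-2\alpha}$ converges, and an integral comparison gives $\sum_{t=1}^{\infty}t^{-2\alpha}\leq 1+\int_{1}^{\infty}t^{-2\alpha}\,dt=\tfrac{2\alpha}{2\alpha-1}$. This produces exactly the exponential factor $\exp\!\bigl(\tfrac{2\alpha\gamma^{2}LA}{2\alpha-1}\bigr)$ appearing in the corollary.

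For the lower side, I would bound the LHS by $\min_{1\leq t\leq T}r_{t}\cdot\gamma\sum_{t=1}^{T}t^{-\alpha}$, and use a second integral comparison on the decreasing function $t\mapsto t^{-\alpha}$ to get $\sum_{t=1}^{T}t^{-\alpha}\geq\int_{1}^{T+1}t^{-\alpha}\,dt=\tfrac{(T+1)^{1-\alpha}-1}{1-\alpha}$. Here the hypothesis $\alpha<1$ is essential: it makes $1-\alpha>0$, so this lower bound grows like $T^{1-\alpha}/(1-\alpha)$, matching the denominator in the stated bound up to a harmless lower-order term that gets absorbed for $T$ large. Combining the two estimates yields the claim.

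The main thing to watch is the complementary roles of the two conditions on $\alpha$: $\alpha>1/2$ is what keeps $D_{T}$ bounded uniformly in $T$ (via square-summability), while $\alpha<1$ is what forces $\sum\gamma_{t}\to\infty$ at the polynomial rate $T^{1-\alpha}$. There is no conceptual obstacle; the proof is essentially careful integral bookkeeping layered on top of Theorem~\ref{thm:general_analysis_sgd}.
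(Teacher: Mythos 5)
Your proposal is correct and follows the paper's own proof essentially verbatim: both specialize Theorem~\ref{thm:general_analysis_sgd}, lower-bound the weighted sum by $\min_{t}r_t\cdot\gamma\sum_{t=1}^{T}t^{-\alpha}$, control $D_T$ via $1+x\le e^x$ together with the convergent tail $\sum t^{-2\alpha}\le 2\alpha/(2\alpha-1)$, and lower-bound $\sum t^{-\alpha}$ by an integral test. The only discrepancy you noticed is real: the paper asserts $\int_1^T x^{-\alpha}dx\ge T^{1-\alpha}/(1-\alpha)$, which is not literally true, whereas your careful $\bigl((T+1)^{1-\alpha}-1\bigr)/(1-\alpha)$ bound recovers the stated constant only up to a lower-order term, so your caveat about absorption for large $T$ is the honest way to state the result.
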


\begin{proof}[Proof of Corollary~\ref{cor:general_analysis_sgd:decreasing}]
Similarly to before, we first use that $\sum_{t=1}^{T}\gamma_{t}r_{t-1} \geq \min_{1\leq t \leq T} r_{t} \sum_{t=1}^{T}\gamma_{t}$.
The rest follows with the help of integral tests for convergence; $\sum_{t=1}^{T}t^{-\alpha}\geq\int_{1}^{T}x^{-\alpha}\,dx\geq T^{1-\alpha}/(1-\alpha)$ and $\sum_{t=1}^{T}t^{-2\alpha}=1+\sum_{t=2}^{T}t^{-2\alpha}\leq1+\int_{1}^{T}x^{-2\alpha} \, dx \leq 1+1/(2\alpha-1)=2\alpha/(2\alpha-1)$, as $\alpha\in(1/2,1)$.
\end{proof}

\subsection{Convergence of SGD using expected smoothness---Better result for constant and decreasing Stepsize}\label{Appendix:es_conv}

We begin by giving the proof of Theorem~\ref{thm:general_analysis_sgd:k}. 
\begin{proof}[Proof of Theorem~\ref{thm:general_analysis_sgd:k}]
For any $k \in [T]$, we can lower bound the second term, $\frac{1}{2}\sum_{t=1}^{T}\prod_{i=t+1}^{T}\left(1+LA\gamma_{i}^{2}\right)\gamma_{t}r_{t-1}$ of (\ref{eq:thm:1:lower_term}) in Theorem~\ref{thm:general_analysis_sgd}, as follows:
\begin{align*}
\frac{1}{2}\sum_{t=1}^{T}\prod_{i=t+1}^{T}\left(1+LA\gamma_{i}^{2}\right)\gamma_{t}r_{t-1} 
= & \frac{1}{2}\sum_{t=1}^{k-1}\prod_{i=t+1}^{T}\left(1+LA\gamma_{i}^{2}\right)\gamma_{t}r_{t-1} \\
 & \quad + \frac{1}{2}\sum_{t=k}^{T}\prod_{i=t+1}^{T}\left(1+LA\gamma_{i}^{2}\right)\gamma_{t}r_{t-1}
\\ \geq & \frac{1}{2}\min_{1\leq t \leq k-1}r_{t}\sum_{t=1}^{k-1}\prod_{i=t+1}^{T}\left(1+LA\gamma_{i}^{2}\right)\gamma_{t} \\
& \quad + \frac{1}{2}\min_{k\leq t \leq T}r_{t}\sum_{t=k}^{T}\prod_{i=t+1}^{T}\left(1+LA\gamma_{i}^{2}\right)\gamma_{t}
\\ \geq & \frac{1}{2LA\gamma_{1}}\min_{1\leq t \leq k-1}r_{t}\prod_{t=1}^{T}\left(1+LA\gamma_{t}^{2}\right) \\
& \quad + \frac{1}{2LA\gamma_{k}}\min_{k\leq t \leq T}r_{t}\prod_{t=k}^{T}\left(1+LA\gamma_{t}^{2}\right)
\\ = & \frac{D_{T}}{2LA\gamma_{1}}\min_{1\leq t \leq k-1}r_{t} + \frac{D_{T}}{2LA\gamma_{k}D_{k}}\min_{k\leq t \leq T}r_{t} \\
& \geq \frac{D_{T}}{2LA\gamma_{k}D_{k}}\min_{k\leq t \leq T}r_{t}.
\end{align*}
Note that, $\sum_{t=1}^{T}\prod_{i=t+1}^{T}\left(1+LA\gamma_{i}^{2}\right)\left(\gamma_{t}-\frac{LB\gamma_{t}^{2}}{2}\right)r_{t-1}$ in (\ref{eq:thm:1:lower_term}) of Theorem~\ref{thm:general_analysis_sgd} is bounded above by $$\left(\delta_{0}+\frac{C}{2A}\right)D_{T}.$$
Combining these together we obtain the result. 
\end{proof}

\smartparagraph{Convergence of SGD for fixed stepsize---Proof of Theorem \ref{theorem:main-es}.}
To help the reader, we sketched the key steps in the main paper.~The technique used in our calculation is to estimate the quantities, $W, \frac{FW}{E}, \frac{D^{t+1}}{E}$, first, and then working on the upper and lower bounds of the quantities on the right and left side, respectively, of the inequality (6). The last estimates also require conditions on choosing the appropriate stepsize parameter. Below find our detailed derivations. 

In the nonconvex convergence of SGD, by using the $L$-smoothness of $F$ and {\it expected smoothness} of the stochastic gradients, we arrive at the following key inequality; see Lemma 2 by \cite{khaled2020better}:

\begin{eqnarray}\label{eq:sgd ineq main-es-app}
\gamma_t(1-\frac{LB\gamma_t}{2})\mbE\|\nabla F_t\|^2\le (1+L\gamma_t^2A)(\mbE(F_t)-\fs)-(\mbE(F_{t+1})-\fs)+\frac{L\gamma_t^2C}{2}.
\end{eqnarray}
Denote $r_t=\mbE\|\nabla F_t\|^2$, $\delta_t=\mbE(F_t)-\fs, D\eqdef (1+L\gamma^2A), E\eqdef \gamma(1-\frac{LB\gamma}{2}), F\eqdef\frac{L\gamma^2C}{2},$ and rewrite (\ref{eq:sgd ineq main-es-app}) as
\begin{eqnarray}\label{eq:sgd ineq main-1-app}
\delta_{t+1}\le D\delta_t-Er_t+F,
\end{eqnarray}
 which after unrolling the recurrence becomes
\begin{eqnarray}\label{eq:sgd ineq main-2-es-app}
\delta_{T+1}\le D^{T+1}\delta_{0}-E\sum_{t=0}^TD^{T-t}r_t+F\sum_{t=0}^TD^t.
\end{eqnarray}

Denote $W=\sum_{t=0}^TD^t$. Rearranging the terms again and dividing both sides by $E$ we have 
\begin{eqnarray}\label{eq:sgd ineq main-3-es-app}
\sum_{t=0}^T{D^{T-t}}r_t +\frac{\delta_{T+1}}{E} \le \frac{D^{T+1}}{E}\delta_{0} +\frac{FW}{E}.
\end{eqnarray}
Note that $$W=\sum_{t=0}^TD^t=
\frac{(1+L\gamma^2A)^{T+1}-1}{L\gamma^2A},~~\frac{FW}{E}=
\frac{C[(1+L\gamma^2A)^{T+1}-1]}{\gamma A(2-LB\gamma)},$$ and
$$\frac{D^{T+1}}{E}=\frac{2(1+L\gamma^2A)^{T+1}}{\gamma(2-LB\gamma)}.$$
Therefore, (\ref{eq:sgd ineq main-3-es-app}) can be written as 
\begin{equation}\label{eq:sgd ineq main-3star-es-app}
\sum_{t=0}^T{(1+L\gamma^2 A)^{T-t}}r_t +\frac{2\delta_{T+1}}{\gamma (2-L\gamma B)}\le \frac{2(1+L\gamma^2A)^{T+1}}{\gamma(2-LB\gamma)}\delta_{0}+\frac{C[(1+L\gamma^2A)^{T+1}-1]}{\gamma A(2-LB\gamma)}.
\end{equation}
Let $\eta\in (0,1]$. Then the left-hand side in the inequality (\ref{eq:sgd ineq main-3star-es-app}) is bounded from below by $$
\min_{(1-\eta)T\leq t\leq T} r_t
\sum_{(1-\eta)T\leq t\leq T} (1+L\gamma^2 A)^{T-t}
\geq
(\eta T-1)\min_{(1-\eta)T\leq t\leq T} r_t;
$$
if ${LB\gamma \leq 1}$ and ${(1+L\gamma^2 A)^{T+1}\leq 3}$ then the right-hand side of (\ref{eq:sgd ineq main-3star-es-app}) could be bounded from above by
\begin{equation}\label{upper-bound-app}
\frac{6\delta_0}{\gamma}+\frac{2C}{\gamma A}.
\end{equation}
Hence, we obtain
\begin{equation}\label{x1:sgd-app}
\min_{(1-\eta)T\leq t\leq T} r_t\leq 2\left(3{\delta_0}+\frac{C}{ A}\right)\frac{1}{(\eta T-1)\gamma}.
\end{equation}
Now, letting ${\gamma:=\sqrt{\frac{\ln 3}{(T+1)LA}},}$ we are able to show the following result:

Let $T$ satisfy the inequality in the Theorem. We first estimate $(\eta T -1)\gamma$ from below. We have
$$
(\eta T -1)\gamma=\frac{\eta T-1}{\sqrt{T+1}}\frac{\sqrt{\ln 3}}{\sqrt{LA}}>
\frac{\frac{1}{2}\eta T}{\sqrt{2T}}\frac{\sqrt{\ln 3}}{\sqrt{LA}}
=\frac{\eta \sqrt{\ln 3}}{2\sqrt{2LA}} \sqrt{T}
$$
since $T\geq 2/\eta$ and $T\geq 1$. Using this estimate in inequality (\ref{x1:sgd}), we get
$$
\min_{(1-\eta)T\leq t\leq T} r_t\leq 
2\left(3{\delta_0}+\frac{C}{ A}\right)
\frac{2\sqrt{2LA}}{\eta \sqrt{\ln 3} \sqrt{T}}
$$
which is less than $\varepsilon$ by the choice of $T$. Next, using the well-known inequality $1+x\leq e^x$ for all $x\geq 0$, we have
$$(1+L\gamma^2 A)^{T+1}=\left(1+\frac{\ln 3}{T+1}\right)^{T+1}\leq e^{\ln 3}=3.
$$
Finally, note that the inequality $T+1\geq LB^2\ln 3/A$ implies that $LB\gamma\leq 1$ is satisfied. So, we verified that if $T$ is chosen as in the Theorem, all the needed inequalities for obtaining inequality (\ref{x1:sgd}) are true. This completes our proof of Theorem \ref{theorem:main-es}.

\smartparagraph{Convergence of SGD for decreasing stepsize---Proof of Theorem \ref{theorem:main-es-decreasing}.}
We provide more details for the proof sketched in Section 3. Recall that $w_{-1}=1$ and $w_t=\frac{w_{t-1}}{(1+L\gamma_{t}^2A)}$ and recall (\ref{eq:sgd ineq main-ds-sum})
\begin{eqnarray}\label{eq:sgd ineq main-ds-sum-ap}
\sum_{t=0}^Tw_t\gamma_t(1-\frac{LB\gamma_t}{2})r_t\le \delta_0+\frac{LC}{2}\sum_{t=0}^Tw_t\gamma_t^2.
\end{eqnarray}
Since $\{w_t\}_{t=0}^T$ is a non-negative, decreasing weighting sequence, we have $w_T\le w_t\le w_{-1}=1$ for all $t\in[T].$ Now consider, $\gamma_t=\frac{\gamma_0}{\sqrt{t+1}}$ with $\gamma_0>0$ a decreasing stepsize sequence for all $t\in[T].$ As a consequence, we have 
$$
\sum_{t=0}^Tw_t\gamma_t^2\le \gamma_0^2\int_{t=0}^T\frac{1}{t+1}dt=\gamma_0^2\left(\ln(T+1)+1 \right).
$$
Hence, the right hand side of (\ref{eq:sgd ineq main-ds-sum-ap}) is bounded above by
\begin{eqnarray}\label{eq:upper_bd-ap}
\delta_0+\frac{LC}{2}\sum_{t=0}^Tw_t\gamma_t^2\le 
\left(\delta_0+\frac{LC}{2}\gamma_0^2\ln(T+1)\right).
\end{eqnarray}
Following the same technique as in the constant stepsize case, the left hand side of (\ref{eq:sgd ineq main-ds-sum-ap}) is bounded from below by
\begin{eqnarray}\label{eq:lower_bd-ap}
\sum_{t=0}^Tw_t\gamma_t\left(1-\frac{LB\gamma_t}{2}\right)r_t
&\geq & w_T\min_{(1-\eta)T\leq t\leq T} r_t\left(\sum_{t=(1-\eta)T}^T\gamma_t\left(1-\frac{LB\gamma_t}{2}\right)\right).
\end{eqnarray}
We will pause here and estimate the quantities. It is straight-forward to find out
$$
\sum_{t=(1-\eta)T}^T\gamma_t\geq \gamma_0\int_{t=(1-\eta)T}^T\frac{1}{\sqrt{t+1}}dt=\gamma_0(1-\sqrt{1-\eta})\sqrt{T+1}.
$$
and 
$$
\sum_{t=(1-\eta)T}^T\frac{LB}{2}\gamma_t^2\leq \frac{LB}{2}\gamma_0^2\int_{t=[(1-\eta)T]}^T\frac{1}{t+1}dt+1=\frac{LB\gamma_0^2}{2}\left(\ln(T+1)-\ln([(1-\eta)T]+1)\right).
$$
Recall that, by definition 
$$
w_T=\frac{w_{-1}}{\Pi_{t=0}^T(1+LA\gamma_t^2)}\geq \frac{w_{-1}}{\left(\frac{\sum_{t=0}^T(1+LA\gamma_t^2)}{T+1}\right)^{T+1}}.
$$
The last inequality is due to the arithmetic-geometric inequality.


Next, we upper bound 
$$
\left(\frac{\sum_{t=0}^T(1+LA\gamma_t^2)}{T+1}\right)^{T+1}\overset{\gamma_t=\frac{\gamma_0}{\sqrt{t+1}}}{=}\left(1+\frac{LA\sum_{t=0}^T\frac{\gamma_0^2}{{t+1}}}{T+1}\right)^{T+1}\leq\left(1+\frac{LA\gamma_0^2\ln(T+1)}{T+1}\right)^{T+1}.
$$
Considering $LA\gamma_0^2\ln(T+1)<1$ we have 
$$
\left(1+\frac{LA\gamma_0^2\ln(T+1)}{T+1}\right)^{T+1}
\leq \frac{1}{1-LA\gamma_0^2\ln(T+1)}.
$$
Taken together, we can find (\ref{eq:lower_bd-ap}) is further lower bounded by
\begin{eqnarray}\label{eq:lower_bd-ap-2}
& ({1-LA\gamma_0^2\ln(T+1)})\min_{(1-\eta)T\leq t\leq T} r_t(\gamma_0(1-\sqrt{1-\eta})\sqrt{T+1}-\frac{LB\gamma_0^2}{2}\ln(T+1) \nonumber \\
& +\frac{LB\gamma_0^2}{2}\ln([(1-\eta)T]+1)).
\end{eqnarray}
Combining (\ref{eq:upper_bd-ap}) and (\ref{eq:lower_bd-ap-2}) completes the proof of Theorem \ref{theorem:main-es-decreasing} for stepsize $\gamma_t=\frac{\gamma_0}{\sqrt{t+1}}$.



Additionally, we show the nonconvex convergence of SGD for decreasing learning rates of the form $\gamma_{t}=\gamma t^{-\alpha}$ with $\alpha\in(1/2,1)$ in the following Corollary.  

\begin{corollary}[Decreasing learning rate of Theorem~\ref{thm:general_analysis_sgd:k}]  \label{cor:general_analysis_sgd:decreasing:k}
For decreasing learning rates on the form $\gamma_{t}=\gamma t^{-\alpha}$ with $\alpha\in(1/2,1)$, we have after
\begin{align*}
k\geq\left(\frac{1-\alpha}{\gamma\epsilon}\left(2\delta_{0}+\frac{C}{A}\right)\exp\left(\frac{2\alpha\gamma^{2}LA}{2\alpha-1}\right)\right)^{\frac{1}{1-\alpha}}
\end{align*}
iterations that
\begin{align*}
\min_{k\leq t \leq T} r_{t} \leq \epsilon, \quad \epsilon>0.
\end{align*}
\end{corollary}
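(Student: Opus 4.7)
The plan is to invert the asymptotic bound already established in Corollary~\ref{cor:general_analysis_sgd:decreasing}. That corollary shows, for the stepsize schedule $\gamma_{t}=\gamma t^{-\alpha}$ with $\alpha\in(1/2,1)$, that
$$
\min_{1\le t\le T} r_{t} \;\le\; \frac{1-\alpha}{\gamma\, T^{1-\alpha}}\left(2\delta_{0}+\frac{C}{A}\right)\exp\left(\frac{2\alpha\gamma^{2}LA}{2\alpha-1}\right),
$$
and Corollary~\ref{cor:general_analysis_sgd:decreasing:k} is essentially the iteration-complexity form of the same inequality. So the bulk of the proof reduces to a one-line algebraic rearrangement: set the right-hand side above equal to $\epsilon$, solve for the driving variable, and rename it $k$.

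Concretely, the steps I would carry out are: first, invoke Corollary~\ref{cor:general_analysis_sgd:decreasing} with the running time written as $k$ in place of $T$, observing that the two structural estimates used in that proof---the product bound $D_{k}\le\exp(2\alpha\gamma^{2}LA/(2\alpha-1))$ obtained from $1+x\le e^{x}$ and $\sum_{t\ge1}t^{-2\alpha}\le 2\alpha/(2\alpha-1)$, and the sum bound $\sum_{t=1}^{k}\gamma_{t}\ge\gamma k^{1-\alpha}/(1-\alpha)$ obtained from the integral test---are both uniform in $k$ and independent of any later horizon; second, demand $\frac{1-\alpha}{\gamma\, k^{1-\alpha}}(2\delta_{0}+C/A)\exp(\cdot)\le\epsilon$; and third, solve for $k$ by raising both sides to the power $1/(1-\alpha)$, which reproduces exactly the iteration threshold quoted in the statement. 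The condition $\alpha>1/2$ is used in step one to guarantee the summability $\sum t^{-2\alpha}<\infty$, while the condition $\alpha<1$ is used in step two so that $\sum t^{-\alpha}$ diverges fast enough to yield a genuine rate.

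The only genuine subtlety I foresee is reconciling the index range ``$k\le t\le T$'' in the corollary with what the calculation literally delivers, which is a bound on $\min_{1\le t\le k}r_{t}$. Under the natural reading that ``after $k$ iterations'' one has found an $\epsilon$-stationary iterate somewhere in the history, this is harmless: for every $T\ge k$ we automatically have $\min_{1\le t\le T}r_{t}\le\min_{1\le t\le k}r_{t}\le\epsilon$. If a strictly tail-localised version is desired, I would instead mirror the split-sum step used in the proof of Theorem~\ref{thm:general_analysis_sgd:k}, this time lower-bounding the partial sum $\sum_{t=k+1}^{T}\gamma_{t}$ via the integral comparison on $[k+1,T+1]$; that route delivers the same $1/(1-\alpha)$ exponent, at the cost of a multiplicative constant depending on $T/k$ and a mild assumption such as $T\ge 2k$ so that the constant stays bounded away from zero. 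This is the only step I would expect to require any real care---everything else is the standard integral-test bookkeeping already used in Corollary~\ref{cor:general_analysis_sgd:decreasing}.
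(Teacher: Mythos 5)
The threshold you compute in Route 1 is exactly the expression printed in the corollary, so you have correctly reverse-engineered where the stated bound comes from: set the right-hand side of Corollary~\ref{cor:general_analysis_sgd:decreasing} equal to $\epsilon$, solve, and rename the horizon $k$. And you are right to flag the index-range mismatch: Corollary~\ref{cor:general_analysis_sgd:decreasing} bounds $\min_{1\le t\le k}r_t$, whereas the statement asserts $\min_{k\le t\le T}r_t\le\epsilon$; the two are not equivalent, and inverting Corollary~\ref{cor:general_analysis_sgd:decreasing} cannot by itself produce a tail-localized guarantee. That observation is the genuinely important point, and the paper's own one-line proof (``follows directly from Theorem~\ref{thm:general_analysis_sgd:k} with use of Corollary~\ref{cor:general_analysis_sgd:decreasing}'') simply elides it.

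If you instead take the paper's proof sketch at face value and apply Theorem~\ref{thm:general_analysis_sgd:k} directly, you get $\min_{k\le t\le T}r_t\le\left(2\delta_0+\tfrac{C}{A}\right)LA\,\gamma_k D_k$; substituting $\gamma_k=\gamma k^{-\alpha}$ and the bound $D_k\le\exp\!\left(\tfrac{2\alpha\gamma^2LA}{2\alpha-1}\right)$ from the proof of Corollary~\ref{cor:general_analysis_sgd:decreasing} gives a bound decaying like $k^{-\alpha}$, hence an iteration threshold $k\gtrsim\epsilon^{-1/\alpha}$ with prefactor $LA\gamma$. That is a genuine tail guarantee, valid for every $T\ge k$ with no hidden dependence on $T$, but its exponent $1/\alpha$ and prefactor $LA\gamma$ do not match the $1/(1-\alpha)$ and $(1-\alpha)/\gamma$ printed in the corollary --- so the corollary's stated threshold does not actually come out of the paper's own cited route either. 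Your Route 2 (lower-bounding $\sum_{t=k+1}^T\gamma_t$ by an integral) is a third path that does keep the $1/(1-\alpha)$ exponent and does localize to the tail, but it necessarily imports a $T$-versus-$k$ condition, and your proposed $T\ge 2k$ is too weak: $(T+1)^{1-\alpha}-(k+1)^{1-\alpha}\approx(2^{1-\alpha}-1)\,k^{1-\alpha}$, and for $\alpha$ near $1$ the factor $2^{1-\alpha}-1$ is tiny, inflating the iteration threshold by $(2^{1-\alpha}-1)^{-1/(1-\alpha)}$, which diverges; the clean requirement is $T\ge 2^{1/(1-\alpha)}k$. In short, you located the correct ingredients and identified the real gap, but neither the paper's one-line proof nor your Route 2 reconciles the stated constants with a strictly tail-localized claim: to recover the statement exactly as printed, one must either read it as $\min_{1\le t\le k}r_t\le\epsilon$, or accept a constant depending on $T/k$.
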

\begin{proof}[Proof of Corollary~\ref{cor:general_analysis_sgd:decreasing:k}]
This results follows directly from Theorem~\ref{thm:general_analysis_sgd:k} with use of Corollary~\ref{cor:general_analysis_sgd:decreasing}, e.g. see the proof technique in \cite{needell2014stochastic}.
\end{proof}



\subsection{Convergence of random reshuffling-SGD~(RR-SGD)}\label{Appendix:rrsgd_conv}
{The existing programming interfaces in ML toolkits such as PyTorch \citep{pytorch} and TensorFlow \citep{tf} use a different approach---{\em random reshuffling} or {\em randomness without replacement} \citep{mishchenko2020random, gurbuzbalaban2021random}. In this case, at each cycle, $t$, a random permutation, $\sigma_{t}$, of the set $[n]$ is selected, and one 
complete run is performed taking all indices 
from $\sigma_{t}$, which guarantees that each function in (\ref{eq:opt}) contributes exactly 
once. 
Formally, RR-SGD updates are of the form: \begin{eqnarray}\label{iter:rr-sgd}
& x_{(t-1)n+i}=x_{(t-1)n+i-1}-\gamma_t g_{\sigma_t(i)}(x_{(t-1)n+i-1}),\;\;\; i=1,2,...,n;\; t=1,2,3,..., \nonumber
\end{eqnarray}
where $g_{\sigma_{t}(i)}(x_j)$ is the stochastic gradient calculated at $x_j$. 
RR-SGD posses faster convergence than regular SGD \citep{mishchenko2020random,gurbuzbalaban2021random}, leaves less stress on the memory (cf. Section 19.2.1 in \cite{bengio2012practical}), and hence more practical.}

{We state the bounded variance assumption of gradients from \cite{mishchenko2020random} that is used in proving the nonconvex descent lemma of RR-SGD; see Lemma \ref{theorem:recurrence}. For details of how Assumptions \ref{ass:ABC} and \ref{ass:bounded_similarity} are connected see \cite{mishchenko2020random}.}
\begin{assumption}\label{ass:bounded_similarity}
{\textbf{(Bounded variance of gradients)} There exist constants, $\cA, \cB\geq0,$ such that, for all $x\in\R^d$, the variance of gradients follow
\begin{equation*}
   \textstyle \frac{1}{n}\sum_{i \in [n]}\norm{\nabla f_i(x) - \nabla F(x)}^2 \leq 2\cA(F(x)-\fs)+\cB.
\end{equation*}}
\end{assumption}

Recently, \cite{mishchenko2020random} showed a better nonconvex convergence of RR-SGD compared to prior work of \cite{nguyen2021unified} without the bounded gradient assumption. \cite{mishchenko2020random} followed Assumption \ref{ass:bounded_similarity}---bounded variance of gradients. In this section, we sketch the key steps of the convergence of RR-SGD.  We start by quoting the key descent Lemma used for the convergence of RR-SGD from \cite{mishchenko2020random}. We focus on constant stepsize case, results for decreasing stepsize follow the similar arguments.
\begin{lemma}\label{theorem:recurrence}
Let $F$ follow Assumptions \ref{ass:minimum}, \ref{ass:smoothness}, and \ref{ass:bounded_similarity}, and the update rule in (\ref{iter:rr-sgd}) is run for $T$ epochs. Then for $\gamma\le \frac{1}{2Ln}$ and $t\in\{0,1,\cdots T-1\}$, the iterates of (\ref{iter:rr-sgd}) satisfy %
\begin{eqnarray}\label{eq:rr-sgd}
\displaystyle \left(\mathbb{E}(F_{t+1})-\fs\right)\leq  (1+\cA L^2n^2\gamma^3)\left(\mathbb{E}(F_t)-\fs\right)-\frac{\gamma n}{2}(1-\gamma^2L^2n^2)\mathbb{E}\|\nabla F_t\|^2 +\tfrac{L^2\gamma^3n^2\cB}{2},
\end{eqnarray}
where $T$ denotes the total number of epochs. 
\end{lemma}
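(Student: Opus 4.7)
The plan is to treat one full pass through the permutation $\sigma_{t+1}$ as a single meta-step and exploit the $L$-smoothness of $F$. Denote the inner iterates within epoch $t+1$ by $y_i = x_{tn+i}$ for $i = 0, 1, \ldots, n$, so that $y_0 = x_{tn}$, $y_n = x_{(t+1)n}$, and $F_t = F(y_0)$. Summing the updates across the epoch gives $y_n - y_0 = -\gamma \sum_{i=1}^{n} g_{\sigma_{t+1}(i)}(y_{i-1})$. Applying $L$-smoothness of $F$ (Assumption~\ref{ass:smoothness}, cf.\ Remark~\ref{remark:L_smooth}) to $F(y_n)$ then produces the standard descent inequality, and the task reduces to controlling the cross term $\langle \nabla F(y_0), y_n - y_0\rangle$ and the squared step $\tfrac{L}{2}\|y_n - y_0\|^2$.

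Next I would decompose the meta-gradient. Since $\sigma_{t+1}$ is a permutation of $[n]$, we have $\sum_{i=1}^{n}\nabla f_{\sigma_{t+1}(i)}(y_0) = n\nabla F(y_0)$, so the cross term splits as $-\gamma n\|\nabla F(y_0)\|^2$ plus a drift term involving the differences $\nabla f_{\sigma_{t+1}(i)}(y_{i-1}) - \nabla f_{\sigma_{t+1}(i)}(y_0)$. Per-component $L$-smoothness yields $\|\nabla f_{\sigma_{t+1}(i)}(y_{i-1}) - \nabla f_{\sigma_{t+1}(i)}(y_0)\| \le L\|y_{i-1} - y_0\|$, and I would then recursively bound $\|y_{i-1} - y_0\| \le \gamma\sum_{j<i}\|g_{\sigma_{t+1}(j)}(y_{j-1})\|$. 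Young's inequality absorbs the drift into a $-\tfrac{\gamma n}{2}\|\nabla F(y_0)\|^2$ contribution plus a penalty proportional to $L^2 \gamma\sum_i \|y_{i-1} - y_0\|^2$; the squared step $\tfrac{L}{2}\|y_n - y_0\|^2$ is split analogously.

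The central step is the shuffling variance estimate. Taking conditional expectation over the uniform random permutation $\sigma_{t+1}$ and invoking Assumption~\ref{ass:bounded_similarity} gives
\[
\mathbb{E}\Big[\tfrac{1}{n}\sum_{i=1}^n \|\nabla f_{\sigma_{t+1}(i)}(y_0) - \nabla F(y_0)\|^2\Big] = \tfrac{1}{n}\sum_{i=1}^n \|\nabla f_i(y_0) - \nabla F(y_0)\|^2 \le 2\mathcal{A}(F_t - F_\star) + \mathcal{B}.
\]
Feeding this into the recursively bounded drift terms produces contributions of the form $\gamma^3 L^2 n^2 \mathcal{A}(F_t - F_\star)$ and $\gamma^3 L^2 n^2 \mathcal{B}/2$, which are exactly the coefficients appearing in the lemma. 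Plugging in the stepsize restriction $\gamma \le \tfrac{1}{2Ln}$ keeps the prefactor in front of $\|\nabla F_t\|^2$ positive and reassembles it into the claimed form $\tfrac{\gamma n}{2}(1 - \gamma^2 L^2 n^2)$.

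The main obstacle is the bookkeeping in the drift step: the recursive bound on $\|y_{i-1} - y_0\|$ produces a double sum $\sum_i\sum_{j<i}\|g_{\sigma_{t+1}(j)}(y_{j-1})\|^2$ that naively scales like $n^3$. Closing the induction requires repeatedly using $\gamma \le \tfrac{1}{2Ln}$ so that $L^2\gamma^2 n^2 \le \tfrac{1}{4}$ absorbs the accumulated noise at each level of recursion, yielding a final penalty scaling only like $\gamma^3 n^2$ rather than $\gamma^3 n^3$---this is what makes the stepsize restriction essential rather than merely convenient, and is also what prevents any stronger assumption (like uniform bounded gradients) from being needed.
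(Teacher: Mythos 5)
The paper does not prove this lemma at all: Lemma~\ref{theorem:recurrence} is explicitly quoted from \cite{mishchenko2020random} and used as a black box, so there is no in-paper proof to compare against. Your sketch is structurally close to the actual proof of Mishchenko et al.\ (epoch-level $L$-smoothness of $F$, decomposition of the cross term into a $-\gamma n\|\nabla F(y_0)\|^2$ piece plus a drift involving $\nabla f_{\sigma(i)}(y_{i-1})-\nabla f_{\sigma(i)}(y_0)$, recursive control of the inner-iterate deviation $\|y_{i-1}-y_0\|$), but there is a genuine gap in the last step.

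The missing ingredient is the \emph{sampling-without-replacement variance lemma} (Lemma~1 of Mishchenko et al., 2020): for fixed vectors $X_1,\dots,X_n$ with mean $\bar X$ and variance $\sigma^2=\frac1n\sum_i\|X_i-\bar X\|^2$, a uniformly random permutation $\pi$ satisfies
\begin{equation*}
\mbE\Big\|\sum_{j=1}^{k}X_{\pi(j)}-k\bar X\Big\|^2=\frac{k(n-k)}{n-1}\,\sigma^2\le k\,\sigma^2,
\end{equation*}
i.e.\ the partial sums concentrate at scale $k$, not $k^2$. In your sketch you pass to $\|y_{i-1}-y_0\|\le\gamma\sum_{j<i}\|g_{\sigma(j)}(y_{j-1})\|$ and then square; this puts the norm inside the sum, destroys the cancellation between the positively and negatively correlated permuted terms, and after Cauchy--Schwarz inevitably produces a penalty of order $L^2\gamma^3 n^3\,(2\cA\delta_t+\cB)$. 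The claim that the stepsize restriction $\gamma\le\tfrac{1}{2Ln}$ absorbs the extra factor of $n$ so as to recover $\gamma^3 n^2$ is incorrect: substituting $\gamma n\le\tfrac{1}{2L}$ into $L^2\gamma^3 n^3$ yields $\tfrac{L\gamma^2 n^2}{2}$, which is a factor $\tfrac{1}{L\gamma}\ge 2n$ \emph{larger} than the target $\tfrac{L^2\gamma^3 n^2}{2}$ and, crucially, has the wrong power of $\gamma$ --- with the standard $\gamma\sim T^{-1/3}$ choice the resulting variance term no longer vanishes, so the downstream Theorem~\ref{theorem:main-rr} would not follow. To close the proof you must keep the vector sum $\sum_{j<i}\bigl(g_{\sigma(j)}(y_0)-\nabla F(y_0)\bigr)$ intact, apply the without-replacement concentration bound to it, and only then triangle/Young the drift contribution; this is exactly what makes random reshuffling provably better than with-replacement SGD, and it cannot be replaced by a stepsize condition.
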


{Proceeding similarly as before, and letting ${\gamma_t=\gamma:=\left(\frac{\ln 3}{(T+1)\cA L^2n^2}\right)^{\frac{1}{3}},}$ we can show the following result.}
\begin{theorem}\label{theorem:main-rr}
{Let $F$ follow Assumptions \ref{ass:minimum}, \ref{ass:smoothness}, and \ref{ass:bounded_similarity}, and the update rule in (\ref{iter:rr-sgd}) is run for $T$ epochs. Let $\epsilon>0$ and $\eta\in (0,1]$. If the number of epochs $T>1$ satisfies
$$
T\geq \max \left\{27\left(3\delta_0+\frac{\cB}{\cA}\right)^3\frac{\cA L^2} {n\eta^2\varepsilon^2}, \frac{8 Ln\ln 3 }{\cA}-1, \frac{2}{\eta} \right\},
$$
then, there exists an index, $t\geq (1-\eta)T,$ such that $\mbE\|\nabla F_t\|^2\le \epsilon.$}
\end{theorem}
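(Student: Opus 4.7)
The plan is to mirror the proof of Theorem~\ref{theorem:main-es}, with the RR-SGD descent inequality (\ref{eq:rr-sgd}) playing the role of (\ref{eq:sgd ineq main-es}). Writing $\tilde D \eqdef 1+\cA L^2 n^2 \gamma^3$, $\tilde E \eqdef \frac{\gamma n}{2}(1-\gamma^2 L^2 n^2)$, and $\tilde F \eqdef \frac{L^2 n^2 \gamma^3 \cB}{2}$, Lemma~\ref{theorem:recurrence} takes the compact form $\delta_{t+1} \le \tilde D\,\delta_t - \tilde E\, r_t + \tilde F$, which upon unrolling from $t=0$ to $t=T$ yields
$$\sum_{t=0}^T \tilde D^{T-t} r_t + \frac{\delta_{T+1}}{\tilde E} \;\le\; \frac{\tilde D^{T+1}}{\tilde E}\,\delta_0 + \frac{\tilde F\,(\tilde D^{T+1}-1)}{\tilde E\,(\tilde D-1)}.$$
The key algebraic simplification is $\tilde F/(\tilde D-1)=\cB/(2\cA)$, which puts the right-hand side in exactly the structural form of (\ref{eq:sgd ineq main-3star-es}).

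Next I would calibrate $\gamma=\bigl(\ln 3/((T+1)\cA L^2 n^2)\bigr)^{1/3}$, so that $\cA L^2 n^2 \gamma^3 = \ln 3/(T+1)$ and hence $\tilde D^{T+1}=(1+\ln 3/(T+1))^{T+1}\le e^{\ln 3}=3$, exactly mirroring the SGD case. The second condition $T+1\ge 8Ln\ln 3/\cA$ in the theorem is present precisely to enforce $\gamma\le 1/(2Ln)$, which simultaneously satisfies the hypothesis of Lemma~\ref{theorem:recurrence} and ensures $1-\gamma^2 L^2 n^2\ge 3/4$, so $\tilde E\ge 3\gamma n/8$. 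Plugging these two estimates into the display above, the right-hand side is bounded above by a constant multiple of $(3\delta_0+\cB/\cA)/(\gamma n)$.

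For the lower bound I would restrict the sum on the left to $(1-\eta)T\le t\le T$ and use $\tilde D^{T-t}\ge 1$ to obtain $(\eta T-1)\min_{(1-\eta)T\le t\le T} r_t$, then invoke the third condition $T\ge 2/\eta$ to replace $\eta T-1$ by $\eta T/2$. Combining with the upper bound and substituting the chosen stepsize (note $\gamma n \propto (n/(T+1))^{1/3}(\cA L^2)^{-1/3}$) converts the inequality into an explicit upper estimate of $\min_{(1-\eta)T\le t\le T} r_t$, which the first condition $T\ge 27(3\delta_0+\cB/\cA)^3 \cA L^2/(n\eta^2\epsilon^2)$ is designed to force below $\epsilon$. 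The main obstacle is bookkeeping the cube-root scaling: whereas in the SGD analysis $D-1$ is quadratic in $\gamma$ (yielding $\gamma\propto T^{-1/2}$), here $\tilde D-1$ is cubic in $\gamma$ while $\tilde E$ remains linear, so the balance of exponents---and thus the dependence of the final complexity on $\eta$ and $\epsilon$---shifts accordingly, and the three conditions on $T$ must be verified to be mutually consistent. Once this is done, everything else is a direct translation of the constant-stepsize argument in \S\ref{Appendix:es_conv}.
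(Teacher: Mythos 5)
Your proposal matches the paper's proof essentially step for step: the same unrolled recursion with $\tilde D = 1+\cA L^2 n^2\gamma^3$, $\tilde E = \frac{\gamma n}{2}(1-\gamma^2 L^2 n^2)$, $\tilde F = \frac{L^2 n^2\gamma^3\cB}{2}$, the same cube-root stepsize calibration $\gamma=\bigl(\ln 3/((T+1)\cA L^2 n^2)\bigr)^{1/3}$ forcing $\tilde D^{T+1}\le 3$, the same tail-restricted lower bound $(\eta T-1)\min_{(1-\eta)T\le t\le T}r_t$, and the same use of the three conditions on $T$. Your observation that the second condition $T+1\ge 8Ln\ln 3/\cA$ exactly enforces $\gamma\le 1/(2Ln)$ (the hypothesis of Lemma~\ref{theorem:recurrence}) is in fact slightly more careful than the paper's own write-up, which only invokes the weaker bound $\gamma\le 1/(\sqrt{3}Ln)$ when estimating $\tilde E$.
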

\begin{figure*}
    \centering
    \includegraphics[width=1\textwidth]{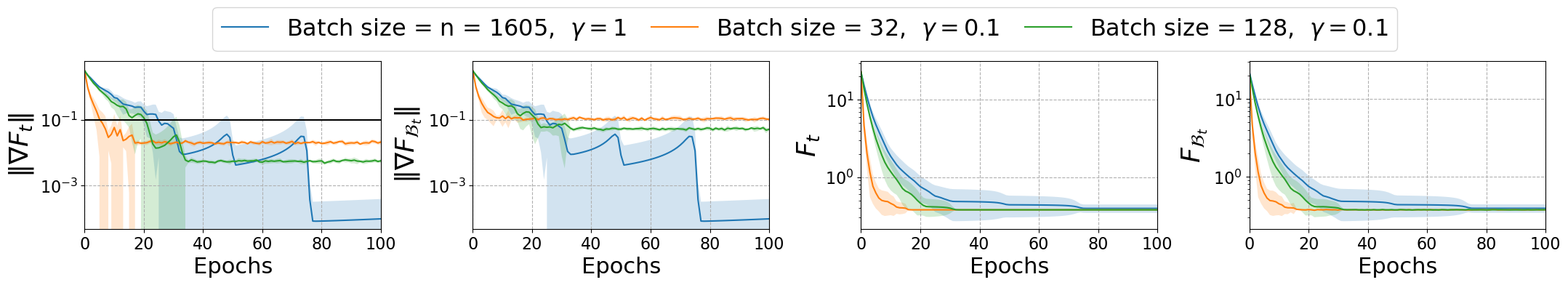}
    \caption{\small{Average of 10 runs of \texttt{RR-SGD} on logistic regression with nonconvex regularization. Batch size, $n=1605$, represents full batch. In the first column, the horizontal lines correspond to the precision, ${\epsilon=10^{-1}}$, and conform to our theoretical result in Theorem \ref{theorem:main-rr}---If the total number of iterations is large enough then almost all the iterates in the tail are $\epsilon$-stationary points.}}\label{fig:logistic_rrsgd}
\end{figure*}

Denote $r_t=\mbE\|\nabla F_t\|^2$, $\delta_t=\mbE(F_t)-\fs, D_2\eqdef (1+\cA L^2n^2\gamma^3), E_2=\frac{\gamma n}{2}(1-\gamma^2L^2n^2),  F_2\eqdef\frac{L^2\gamma^3 n^2\cB}{2}$. Rewrite (\ref{eq:rr-sgd}) as
\begin{eqnarray}\label{eq:sgd ineq main-rr}
\delta_{t+1}&\le&D_2\delta_t-E_2r_t+F_2, \notag
\end{eqnarray} 
 which after unrolling the recurrence becomes
\begin{eqnarray}\label{eq:sgd ineq main-2-rr}
\delta_{t+1}&\le& D_2^{t+1}\delta_{0}-E_2\sum_{j=0}^tD_2^{t-j}r_j+F_2\sum_{j=0}^tD_2^j.
\end{eqnarray}
Denote $W_2=\sum_{j=0}^tD_2^j$. Rearranging the terms again and dividing both sides by $E_2$ we have \begin{eqnarray}\label{eq:sgd ineq main-3-rr}
\sum_{j=0}^t{D_2^{t-j}}r_j +\frac{\delta_{t+1}}{E_2} &\le& \frac{D_2^{t+1}}{E_2}\delta_{0} +\frac{F_2W_2}{E_2}.
\end{eqnarray}
Note that $$W_2=\sum_{j=0}^tD_2^j=\frac{D_2^{t+1}-1}{D_2-1}=\frac{(1+\cA L^2n^2\gamma^3)^{t+1}-1}{\cA L^2n^2\gamma^3},$$ $$\frac{F_2W_2}{E_2}= \frac{L^2\gamma^3 n^2\cB}{2}\frac{(1+\cA L^2n^2\gamma^3)^{t+1}-1}{\cA L^2n^2\gamma^3}\frac{2}{\gamma n(1-\gamma^2L^2n^2)}=\frac{\cB[(1+\cA L^2n^2\gamma^3)^{t+1}-1]}{\cA \gamma n(1-\gamma^2L^2n^2)},$$ and
$$\frac{D_2^{t+1}}{E_2}=\frac{2(1+\cA L^2n^2\gamma^3)^{t+1}}{\gamma n(1-\gamma^2L^2n^2)}.$$
Therefore, (\ref{eq:sgd ineq main-3-rr}) can be written as
\begin{eqnarray}\label{eq:sgd ineq main-3star-rr}
\sum_{j=0}^t{(1+\cA L^2n^2\gamma^3)^{t-j}}r_j +\frac{2\delta_{t+1}}{\gamma n(1-\gamma^2L^2n^2)}&\le\frac{2(1+\cA L^2n^2\gamma^3)^{t+1}}{\gamma n(1-\gamma^2L^2n^2)}\delta_{0}+\frac{\cB[(1+\cA L^2n^2\gamma^3)^{t+1}-1]}{\cA \gamma n(1-\gamma^2L^2n^2)}.\notag\\
\end{eqnarray}
Let $\eta\in (1/t,1)$ (assuming $t>1$). Setting $\gamma\le \frac{1}{\sqrt{3}Ln}$, we have $\frac{2\delta_{t+1}}{\gamma n(1-\gamma^2L^2n^2)}>0.$ Therefore,  the left-hand side in the inequality (\ref{eq:sgd ineq main-3star-rr}) is bounded from below by $$
\min_{(1-\eta)t\leq j\leq t} r_j
\sum_{(1-\eta)t\leq j\leq t} (1+\cA L^2n^2\gamma^3)^{t-j}
\geq
(\eta t-1)\min_{(1-\eta)t\leq j\leq t} r_j;
$$
while if $\gamma\le \frac{1}{\sqrt{3}Ln}$ and $(1+\cA L^2n^2\gamma^3)^{t+1}\leq 3$ then the right-hand side of (\ref{eq:sgd ineq main-3star-rr}) could be bounded from above by
$$
\frac{9\delta_0}{\gamma n}+\frac{3\cB}{\cA\gamma n}.
$$
Hence, we obtain
\begin{equation}\label{x1}
\min_{(1-\eta)t\leq j\leq t} r_j\leq 3\left(3{\delta_0}+\frac{\cB}{\cA}\right)\frac{1}{(\eta t-1)\gamma n}.
\end{equation}
Considering the step size $$\gamma=\gamma_t:=\left(\frac{\ln 3}{(t+1)\cA L^2n^2}\right)^{\frac{1}{3}},$$ we complete the proof of  Theorem \ref{theorem:main-rr}.

\begin{figure*}
    \centering
    \includegraphics[width=1\textwidth]{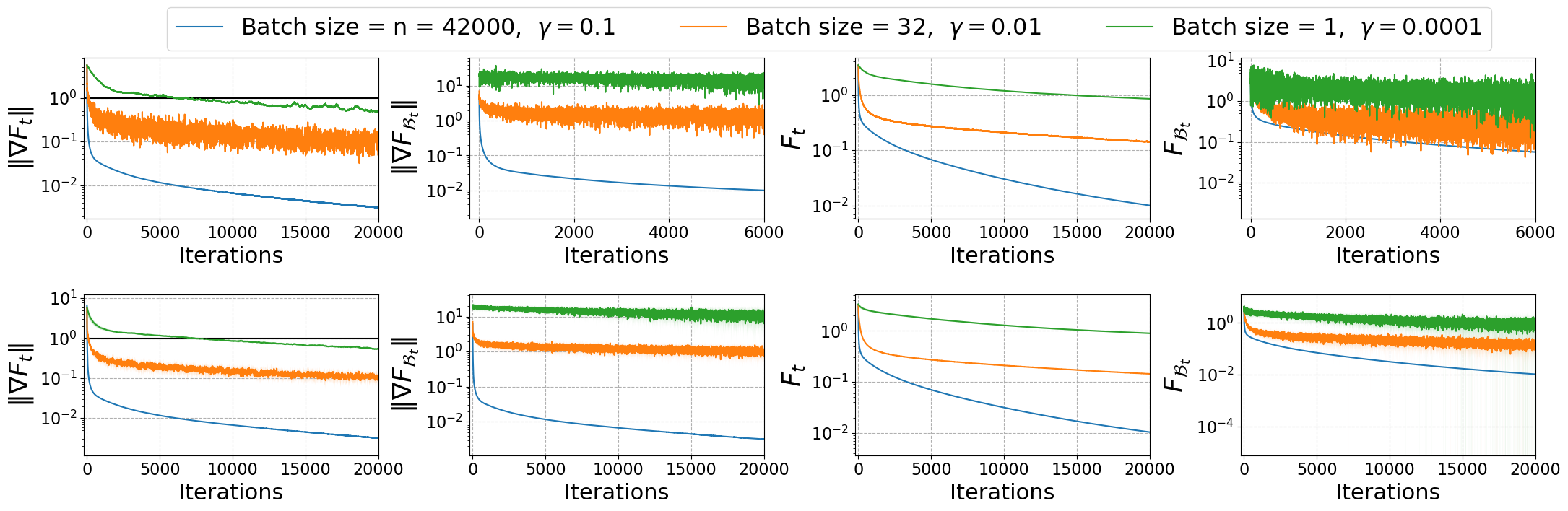}
    \caption{\small{Performance of \texttt{RR-SGD} on MNIST digit classification. The top row shows the result of 1 single run of \texttt{RR-SGD} while the bottom row shows the result of the average of 10 runs. For the plots in the first column, the horizontal lines correspond to the precision, ${\epsilon=1}$---For RR-SGD, if the total number of iterations is large enough then the entire tail comprises of the $\epsilon$-stationary points.}}\label{fig:N2}
\end{figure*}

\subsection{A note on the convergence of SGD for nonconvex and nonsmooth objective}\label{sec:sgd_conv_nonsmooth}
Consider a nonconvex, nonsmooth, finite-sum optimization problem by writing (\ref{eq:opt}) of the form:
\begin{equation}\label{eq:opt-nonsmooth}
\min_{x\in\R^d} \left[F(x) \eqdef \underbrace{\frac{1}{n}\sum_{i=1}^nf_i(x)}_{:=f(x)}+h(x)\right],
\end{equation}
where each ${f_i(x):\R^d\to\R}$ is smooth (possibly nonconvex) for all $i\in[n]$, and ${h:\R^d\to\R}$ is nonsmooth but (non)-convex and relatively simple. 
Consider a mapping, ${\cG_\eta:\R^d\to\R}$ such that
\begin{equation}\label{eq:g_map}
\cG_\eta(x)\eqdef \frac{1}{\eta}(x-{\rm prox}_{\eta h}(x-\nabla f(x))),
\end{equation}
where for a nonsmooth, proper, closed function, $h$, the proximal operator is defined as
\begin{equation}\label{eq:g_map}
{\rm prox}_{\eta h}(x)\eqdef \arg\min_{y\in\R^d}\left(h(y)+\frac{1}{2\eta}\|y-x\|^2\right).
\end{equation}
\begin{theorem}\label{theorem:nonsmooth}
Under the same assumptions as in \cite{j2016proximal}, there exists an 
index, $t\geq (1-\eta)T$ such that,
$\mbE\|\cG_\eta(x_t)\|^2\le O\left(\frac{1}{\eta T}\right),$ where $\eta\in(0,1].$
\end{theorem}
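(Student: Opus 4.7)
The plan is to mimic the tail-splitting argument of Theorem~\ref{theorem:main-es}, with the gradient $\nabla F(x_t)$ replaced by the gradient mapping $\cG_\eta(x_t)$. The first step is to invoke the one-step descent recursion for proximal stochastic gradient iterates established in \cite{j2016proximal}; under their assumptions (each $f_i$ $L$-smooth, $h$ proper closed, and an appropriate moment bound on the stochastic gradient noise), this recursion has the schematic form
\[
\mbE(F(x_{t+1})) \leq \mbE(F(x_{t})) - c_{1}\gamma\,\mbE\|\cG_{\eta}(x_{t})\|^{2} + c_{2}\gamma^{2},
\]
for positive constants $c_{1},c_{2}$ depending only on $L$, on the noise parameters and on the proximal parameter $\eta$, valid for any stepsize $\gamma$ small enough to make the coefficient of $\mbE\|\cG_{\eta}(x_{t})\|^{2}$ positive.

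Next, I would telescope this inequality from $t=0$ to $t=T$ and apply Assumption~\ref{ass:minimum} to absorb $\mbE(F(x_{T+1}))-\fs \geq 0$, obtaining
\[
\sum_{t=0}^{T}\mbE\|\cG_{\eta}(x_{t})\|^{2}
\leq \frac{F_{0}-\fs + (T+1)c_{2}\gamma^{2}}{c_{1}\gamma}.
\]
Then, exactly as in the smooth case, I would split the index set into $[0,(1-\eta)T-1]$ and $[(1-\eta)T,T]$, drop the non-negative first partial sum, and lower bound the tail sum by $(\eta T-1)\min_{(1-\eta)T\leq t\leq T}\mbE\|\cG_{\eta}(x_{t})\|^{2}$. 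This gives
\[
\min_{(1-\eta)T\leq t\leq T}\mbE\|\cG_{\eta}(x_{t})\|^{2}
\leq \frac{F_{0}-\fs + (T+1)c_{2}\gamma^{2}}{c_{1}\gamma(\eta T-1)}.
\]
Finally, selecting the stepsize prescribed by \cite{j2016proximal} for the corresponding algorithm (a constant $\gamma = O(1/L)$ in the variance-reduced regime, or a decreasing $\gamma$ otherwise), together with a mild lower bound on $T$ so that $\eta T - 1 \geq \eta T/2$, collapses the right-hand side to $O(1/(\eta T))$ and yields the claim.

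The main obstacle will be verifying that the descent inequality from \cite{j2016proximal} genuinely takes the clean form above with a single nonnegative coefficient on $\mbE\|\cG_{\eta}(x_{t})\|^{2}$ and a $\gamma$-independent leading constant. The proximal step introduces extra terms involving $\|\prox_{\eta h}(y_{t}) - \prox_{\eta h}(\tilde{y}_{t})\|$ which must be absorbed using firm non-expansiveness of $\prox_{\eta h}$ together with the stochastic noise bound; this is precisely where the assumptions inherited from \cite{j2016proximal} become essential. Once that recursion is in hand, the tail-splitting step is structurally identical to the smooth analysis, and an analogue of Theorems~\ref{theorem:density} and \ref{theorem:density-dss} for the density of $\epsilon$-stationary points (with $\cG_{\eta}$ in place of $\nabla F$) follows by the same partial-sum argument.
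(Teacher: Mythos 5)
Your proposal follows exactly the route the paper gestures at: take the per-iterate descent recursion from \cite{j2016proximal}, telescope it, and then apply the tail-splitting trick from Theorem~\ref{theorem:main-es} to lower-bound the partial sum over $[(1-\eta)T,T]$ by $(\eta T-1)\min_{(1-\eta)T\le t\le T}\mbE\|\cG_\eta(x_t)\|^2$. The paper itself gives no detailed argument here (``follows similar argument as in \cite{j2016proximal} combined with our techniques''), so your reconstruction is the right one in structure.

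One substantive caveat you should resolve rather than leave as an ``obstacle.'' The rate $O(1/(\eta T))$ is a $1/T$-type rate, and it is \emph{not} obtainable from your schematic recursion
\(
\mbE(F(x_{t+1})) \leq \mbE(F(x_{t})) - c_{1}\gamma\,\mbE\|\cG_{\eta}(x_{t})\|^{2} + c_{2}\gamma^{2}
\)
with $c_2>0$: after telescoping, the numerator of your bound contains $(T+1)c_2\gamma^2$, which forces $\gamma=\Theta(1/\sqrt{T})$ and yields only $O(1/(\eta\sqrt{T}))$. Your final sentence suggesting a decreasing $\gamma$ ``otherwise'' also collapses to $O(1/(\eta T))$ is therefore wrong. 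The $O(1/(\eta T))$ claim in the theorem comes from the variance-reduced algorithms of \cite{j2016proximal} (ProxSVRG/ProxSAGA), where the per-iterate recursion is not of your additive-noise form but uses a Lyapunov potential (e.g.\ $F(x_t)+c_t\|x_t-\tilde x\|^2$) whose telescoped sum has a $T$-independent right-hand side with a constant stepsize $\gamma=O(1/L)$. To make your argument airtight you should telescope \emph{that} Lyapunov recursion, not the plain-SGD one, and only then do the tail-split. Finally, note the paper's overloaded use of $\eta$ for both the tail fraction and the proximal/stepsize parameter in $\cG_\eta$; you inherit it, but these two quantities are independent and should be given distinct symbols in a written-out proof.
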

The proof of convergence follows similar argument as in \cite{j2016proximal} combined with our techniques. Moreover, our result can be extended to proximal stochastic gradient algorithms (with or without variance reduction) for nonconvex, nonsmooth finite-sum problems \citep{j2016proximal, li2018simple}, and for non-convex problems with a non-smooth and non-convex regularizer \citep{xu2019non}. 

\section{Reproducible research}\label{App:code}
Our code and results are publicly available at \url{https://anonymous.4open.science/r/nonconvex-convergence-SGD-2844/README.md}. 

\end{document}